\titlespacing\section{0pt}{8pt plus 2pt minus 2pt}{4pt plus 2pt minus 2pt}
\titlespacing\subsection{0pt}{8pt plus 2pt minus 2pt}{4pt plus 2pt minus 2pt}
\titlespacing\subsubsection{0pt}{8pt plus 2pt minus 2pt}{4pt plus 2pt minus 2pt}
\theoremstyle{plain}
\newtheorem{prop}{Proposition}
\newtheorem{lemma}[prop]{Lemma}
\newtheorem{assumption}{Assumption}
\newcommand{\mat}[1]{\left[\begin{smallmatrix}#1\end{smallmatrix}\right]}
\newcommand{\restr}[1]{\vert_{#1}}
\newcommand{\tr}{\mathrm{tr}\,}
\newcommand{\sB}{\mathscr{B}}
\newcommand{\sD}{\mathscr{D}}
\newcommand{\sS}{\mathscr{S}}
\newcommand{\sI}{\mathscr{I}}
\newcommand{\sA}{\mathscr{A}}
\newcommand{\sT}{\mathscr{T}}
\newcommand{\sQ}{\mathscr{Q}}
\newcommand{\sY}{\mathscr{Y}}
\newcommand{\sU}{\mathscr{U}}
\newcommand{\sV}{\mathscr{V}}
\newcommand{\D}{D}
\newcommand{\vg}{\kappa}
\newcommand{\eps}{\varepsilon}
\newcommand{\sotwo}{\mathrm{SO}(2)}
\newcommand{\setwo}{\mathrm{SE}(2)}
\newcommand{\bbR}{\mathbb{R}}
\newcommand{\bbZ}{\mathbb{Z}}
\newcommand{\R}{\mathrm{R}}
\newcommand{\disjunion}{\sqcup} 
\newcommand{\Cart}{\mathrm{Cart}}
\newcommand{\Pol}{\mathrm{Pol}}
\newcommand{\tpl}[1]{\mathsf{#1}}
\newcommand{\hyb}[2]{#1^{\tpl{#2}}}
\newcommand{\F}[2]{F^\mathrm{#2}_#1}
\newcommand{\Fr}[1]{{F^\mathrm{#1}}}
\newcommand{\J}{J}
\newcommand{\hw}{h_\mathrm{w}}
\newcommand{\Fvg}{\hyb{F}{v}}
\newcommand{\gvg}{h_\vg}
\newcommand{\Fbhiso}{\hyb{F}{fa}}
\newcommand{\Fw}{\widetilde{\hyb{F}{s}}}
\newcommand{\epsvh}{\eps_\mathrm{r}}
\newcommand{\delmax}{\bar\delta_\mathrm{max}}
\newcommand{\sma}{{\bf a}}
\newcommand{\smA}{{\bf A}}
\newcommand{\smq}{{\bf q}}
\newcommand{\smM}{{\bf M}}
\newcommand{\smC}{{\bf C}}
\newcommand{\smN}{{\bf N}}
\newcommand{\smy}{{\bf y}}
\newcommand{\smh}{{\bf h}}
\newcommand{\smU}{\Upsilon}
\newcommand{\smH}{{\bf H}}
\newcommand{\smg}{{\bf g}}
\newcommand{\tg}{k_t}
\title{\LARGE
The Penn Jerboa: A Platform for Exploring Parallel Composition of Templates \\ {\large Technical Report to Accompany~\cite{de_parallel_2015}}}
\title{\LARGE \bf
Parallel Composition of Templates for Tail-Energized Planar Hopping}
\author{Avik De$^\star$ \iftoggle{techrep}{and}{\and} Daniel E.\ Koditschek$^\star$
  \thanks{$^\star$Electrical and Systems Engineering, University of Pennsylvania, Philadelphia, PA, USA.
  {\tt\footnotesize \{avik,kod\}@seas.upenn.edu}.}
  \thanks{This work was supported in part by the ARL/GDRS RCTA project, Coop.\ Agreement \#W911NF-10–2−0016 and in part by NSF grant \#1028237.}
}
\begin{document}

\maketitle


\thispagestyle{empty}
\pagestyle{empty}


\begin{abstract}
\iftoggle{techrep}{
We have built a 12DOF,  passive-compliant legged,  tailed biped actuated by four  brushless DC motors. We anticipate that this machine will  achieve varied  modes of quasistatic and dynamic balance, enabling
 a broad range of locomotion tasks  including sitting, standing, walking, hopping, running, turning, leaping, and more. 
Achieving this diversity of behavior with a single under-actuated body, requires a correspondingly diverse array of controllers, motivating our interest in compositional techniques that promote mixing and reuse of a relatively few base constituents to achieve a combinatorially growing array of available choices. Here we report on the development of one important example of such a behavioral programming method, the construction of a novel monopedal sagittal plane hopping gait through parallel composition of four decoupled 1DOF base controllers. 

For this example behavior, the legs are locked in phase and the body is fastened to a boom  to restrict motion to the sagittal plane. The platform's locomotion is powered by the hip motor that adjusts leg touchdown
angle in flight and balance in stance, along with a tail motor that adjusts body shape in flight and drives energy
into the passive leg shank spring during stance. The motor control signals arise from the application in parallel
of four simple, completely decoupled 1DOF feedback laws that provably stabilize in isolation four corresponding
1DOF abstract reference plants. Each of these abstract 1DOF closed loop dynamics represents some simple but
crucial specific component of the locomotion task at hand. We present a partial proof of correctness for this parallel
composition of “template” reference systems along with data from the physical platform suggesting these templates
are “anchored” as evidenced by the correspondence of their characteristic motions with a suitably transformed image
of traces from the physical platform.
}{
We have built a 4DOF tailed monoped that hops along a boom permitting free sagittal plane motion. This underactuated platform is powered by a hip motor that adjusts leg touchdown angle in flight and balance in stance, along with a tail motor that adjusts body shape in flight and drives energy into the passive leg shank spring during stance. The motor control signals arise from the application in parallel of four simple, completely decoupled  1DOF feedback laws that provably stabilize in isolation four corresponding 1DOF abstract reference  plants.  Each of these abstract 1DOF closed loop dynamics represents some simple but crucial specific component of the locomotion task at hand. We present a partial proof of correctness for this parallel composition of ``template'' reference systems along with data from the physical platform suggesting these templates are ``anchored'' as evidenced by the correspondence of their characteristic motions with a suitably transformed image of traces from the physical platform.
}
\end{abstract}


\section{Introduction}

The control of power-autonomous, dynamic legged robots that have a high number of degrees of freedom (DOF) is made difficult by a number of factors including
\begin{inparaenum}[(a)]
\item under-actuation necessitated by power-density constraints, 
\item the existence of significant inertial coupling and Coriolis forces that are hard or impossible to cancel,
\item variable ground affordance, 
\item often hard-to-measure and necessarily rapid hybrid transitions.
\end{inparaenum}
In the face of these challenges, some popular methods of controller design, such as hybrid zero dynamics~\cite{westervelt_feedback_2007}---which are ``exact'' in their domain of applicability but require extremely accurate qualitative and quantitative models---may be challenging to implement in unstructured environments or on imperfectly characterized machines. Similarly, methods depending on local linearizations of the typically (highly) nonlinear dynamics found in dynamically dexterous locomotion and manipulation systems \cite{tedrake_underactuated_2009,holmes_dynamics_2006} typically suffer from small basins of attraction \cite{Buehler_Koditschek_Kindlmann_1989} and (to our knowledge) high sensitivity to parameters.\footnote{In some robotics settings these disadvantages of the exact or local linearized control paradigm can be effectively remedied by recourse to parameter adaptation \cite{whitcomb_comparative_1993}, but in our experience, such methods are too ``laggy'' to work in this hybrid dynamics domain with its intrinsically abrupt and rapidly switching characteristics.}


\begin{figure*}[t]
\minipage[b]{\textwidth}
\centering
\def\svgwidth{0.7\textwidth}
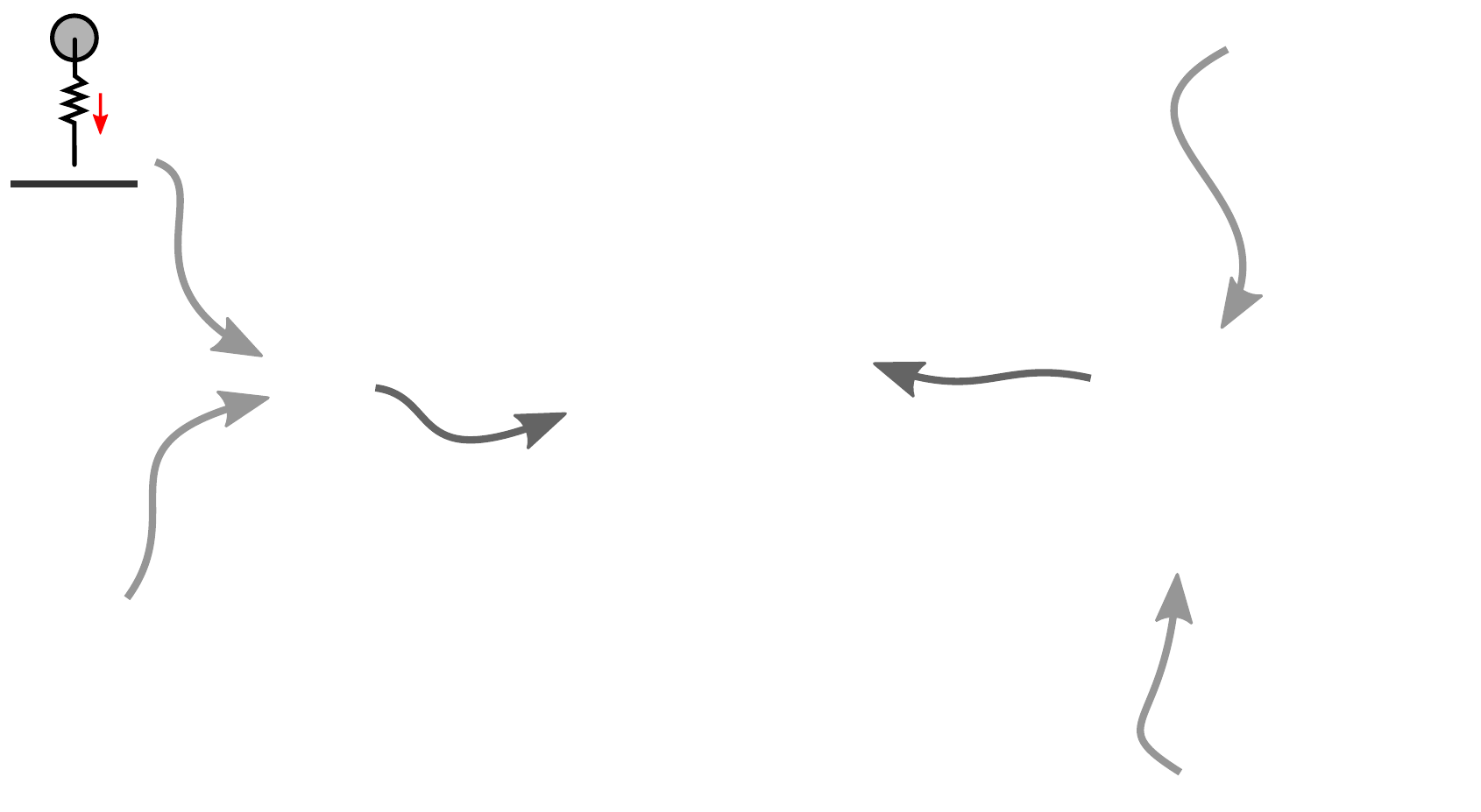
\endminipage\hfill
\caption{Control of a hopping behavior expressed as a hierarchical composition of closed-loop templates. Notionally, the grey arrows represent directed template$\rightarrow$anchor relations. {\bf Center:} A model of the tailed monoped physical platform on which we implement tail-energized planar hopping, labeled with configuration variables (black), actuators (red), and model parameters (blue).}
\label{fig:Tree}
\end{figure*}
Observation (a) suggests that modularity of operation (i.e., wherein different combinations of actuators are used to effect distinctly different dynamical goals at different stages within the task cycle) will be a hallmark of practical locomotion platforms. Observations (b) and (c) imply that simpler, less exact but potentially more robust representations of the principal dynamical effects likely to prevail across a wide range of substrates may offer a tractable means of working with  rather than fighting against, or learning exactly the highly varied dynamical details. Observation (d) implies that higher authority sensorimotor control activity ought to target continuous phases of the locomotion cycle, leaving the transition event interventions to more passive and mechanical sources of regulation \cite{Burden_Revzen_Sastry_Koditschek_2014}.  In sum, these observations motivate the search for modular,  reduced order representations of locomotion task constituents that are specialized to couple selected actuation affordances to particular DOFs at particular phases of the locomotion cycle. The value of such  component task representatives remains  hostage to the availability of methods for composing them in a stable manner. 




\newlength{\rowfiga}
\newlength{\rowfigb}
\iftoggle{techrep}{
\setlength{\rowfiga}{0.2\textwidth}
\setlength{\rowfigb}{3.5cm}
}{
\setlength{\rowfiga}{0.2\textwidth}
\setlength{\rowfigb}{3.5cm}
}
\begin{figure*}[t]
\minipage[b]{\rowfiga}
\centering
\includegraphics[width=\rowfigb]{./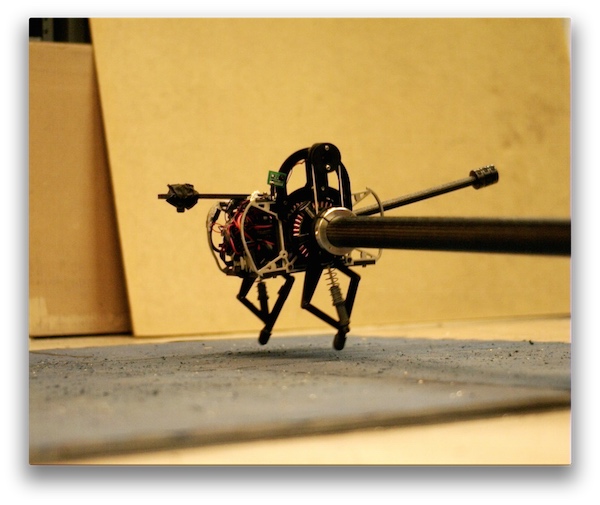}
\endminipage\hfill
\minipage[b]{\rowfiga}
\centering
\includegraphics[width=\rowfigb]{./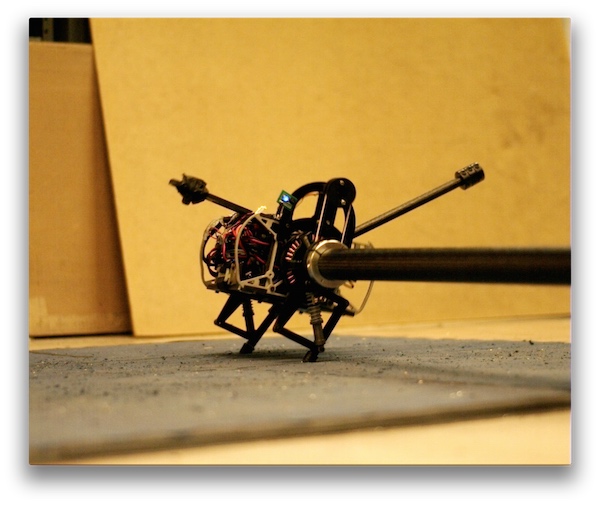}
\endminipage\hfill
\minipage[b]{\rowfiga}
\centering
\includegraphics[width=\rowfigb]{./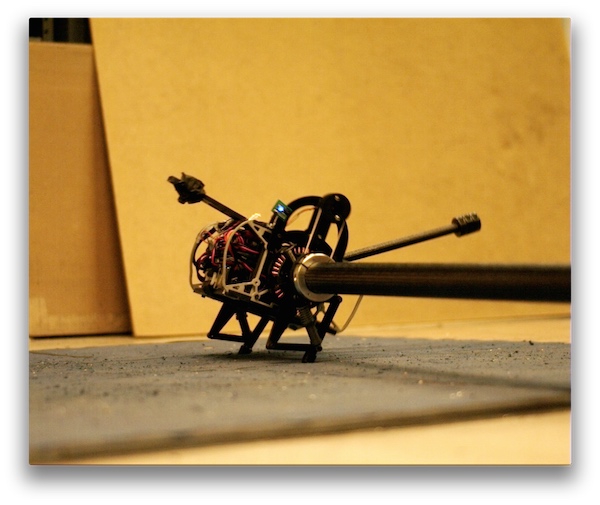}
\endminipage\hfill
\minipage[b]{\rowfiga}
\centering
\includegraphics[width=\rowfigb]{./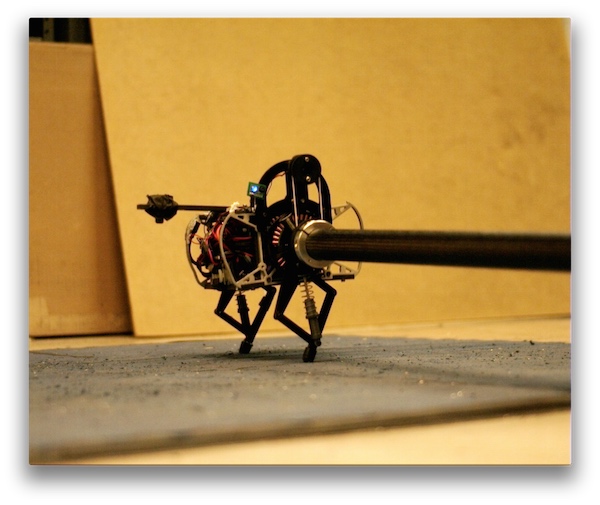}
\endminipage\hfill
\minipage[b]{\rowfiga}
\centering
\includegraphics[width=\rowfigb]{./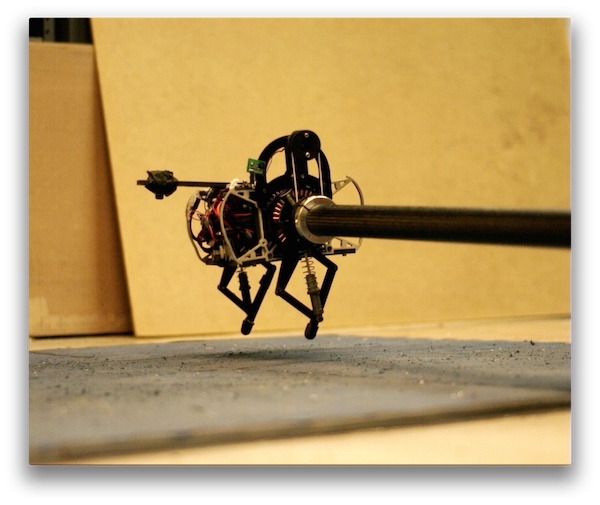}
\endminipage
\caption{Snapshots from apex to apex of tail-energized planar hopping (\S\ref{sec:TailedMonoped}) implemented on a new robot platform---the Penn Jerboa (\S\ref{sec:Experiments}).}
\label{fig:robot}
\end{figure*}


This report introduces a novel locomotion platform, the Penn Jerboa, Fig.\ \ref{fig:robot}, to put a slowly maturing formalism for the composition of such modules to a practical test. We adopt the template-anchor\footnote{
The template-anchor relation as exemplied in various physical~\cite{Buehler_Koditschek_Kindlmann_1989,nakanishi_brachiating_2000} and numerical~\cite{Saranli_Schwind_Koditschek_1998} studies associates a pair of smooth vector fields, $\hyb{f}{T}, \hyb{f}{A}$ on a pair of smooth spaces, $\sT \subset \sA$   via the condition that $\sT$ is an attracting invariant submanifold of the anchor field, $\hyb{f}{A}$, whose restriction dynamics is conjugate to that of the template field, $\hyb{f}{T} \sim \hyb{f}{A} \restr{\sT}$ (where $\sim$ denotes equivalence up to smooth change of coordinates). In this paper, we are dealing with hybrid fields and flows for which the extended definition and its verification is a bit more intricate. Thus exceeding the scope and length constraints of the present paper,  we will treat the hybrid template-anchor relation as an intuitive notion here.} 
framework \cite{full_templates_1999}
to represent this machine's 4DOF steady sagittal plane running  as the hierarchical composition of  the low DOF constituents  depicted in Fig.\ \ref{fig:Tree}.  At the leaves of this hierarchy tree, we introduce four different 1DOF templates that emerge  from the decades old bioinspired running literature \cite{holmes_dynamics_2006,Blickhan_Full_1993}, joined by a new arrival from recent work on bioinspired tails \cite{libby_tail-assisted_2012,johnson_tail_2012}.
We apply  the four decoupled 1DOF control laws associated with these isolated ``leaf'' templates directly to the (highly dynamically coupled) physical platform and demonstrate empirically steady sagittal plane running (on a circular boom) whose body motions reveal, when viewed in the appropriate coordinates, Fig.\ \ref{fig:compositionPlots}, striking similarity to the corresponding isolated 1DOF constituents.  We show (up to a still unproven technical conjecture) that the appropriate two pairs of these four 1DOF leaf templates  are formally anchored by the two ``interior'' 2DOF templates depicted in Fig. \ref{fig:Tree}, in the sense that the 1DOF systems define attracting invariant submanifolds of the 2DOF systems that exhibit conjugate restriction dynamics. We conjecture, as well,  that the two interior nodes (the 2DOF templates) of the figure are in turn formally anchored by a physically realistic dynamical model of the closed loop Penn Jerboa in the sagittal plane. The data of Fig.~\ref{fig:compositionPlots} support this hypothesis, but we have not yet succeeded in completing the proof beyond the embedding and invariance properties.

Notwithstanding the specifics of our compositional approach to its control, we believe that the  new physical platform  is itself of independent interest by virtue of its added appendage (the ``tail''), opening up a multiplicity of diverse uses for both of its two revolute actuators. 
Note again, however, this diversity of uses cannot be achieved without some recourse to behavioral modularity.  In that light, we are particularly attracted by these simple low-DOF template controllers. In our experience, such constructions have the hope of succeeding in unstructured outdoor settings, since they
build on the relatively robust template dynamics.

\subsection{Relation to Prior Literature}

This ``compositional'' method of controller synthesis was pioneered empirically by Raibert \cite{raibert_legged_1986} for planar and 3D hopping machines, and we develop our planar hopping behavior by building up from those ideas.  Our physical platform (Fig.\ \ref{fig:Tree} center) forgoes Raibert's prismatic shank actuator, and instead places that actuator in an inertial appendage. This motivates us to explore how tails can be ``recycled'' from their transitional agility duties \cite{libby_tail-assisted_2012,johnson_tail_2012}, now repurposed to substitute for Raibert's shank actuator and play the role of steady-state running energizer in the sagittal plane. Apart from their use in transitional maneuvers (inertial control in free-falling lizards \cite{gillis_losing_2009} and robots \cite{libby_tail-assisted_2012,johnson_tail_2012} or in turning lizards \cite{higham_maneuvering_2001} and robots \cite{pullin2012dynamic}) it has recently been discovered that kanagaroos do positive work with their tails in a quasistatic pentapedal gait \cite{oconnor_kangaroos_2014}.
In our implementation,  the tail contributes the reorientation function in flight, and the energetic ``pump'' function in stance (albeit in a dynamic fashion). We are not aware of prior robotic locomotion work wherein a tail is used to help power the stance phase.


\begin{table}[t]
  \caption{List of Symbols}
  \label{tab:symbols}
  \begin{center}
  \footnotesize
  \begin{tabular}{@{}ll@{}}
    \toprule
    $i\in\bbZ_2$ & 
    Hybrid mode, where 1 is stance, 2 is flight \\
    $\sD_i^\star$ & 
    Domain for template $\star$ in mode $i$ \\
    $f_i^\star : \sD_i^\star \rightarrow T\sD_i^\star$ &
    Vector field in mode $i$ \\
    $r_i^\star : \partial\sD_i^\star \rightarrow \sD_{i+1}^\star$ &
    Reset map from mode $i$ to $i+1$ \\
    $F_i^\star : \sD_i^\star \rightarrow \partial\sD_i^\star$ &
    Mode $i$ flow evaluated at the next transition \\
    $F^\star = F_2^\star \circ F_1^\star$ &
    Return map at touchdown (TD) event \\
    $p_i^\star(x, u)$ &
    Plant to which we apply $u = g_i(x)$ to get $f_i^\star$ \\
    $I_d \in \bbR^{d\times d}$ &
    Identity matrix of size $d$\\
    $\J = \mat{0 & -1\\1&0}$ &
    Planar skew-symmetric matrix \\
    $e_i\in\bbR^d$ &
    $i^\mathrm{th}$ standard basis vector \\
    $\R : S^1 \rightarrow \sotwo$ &
    Map from angle to rotation matrix \\
    $T x = (x, \dot x)$ &
    Tangent vector associated with $x$ \\
    $\D_x y$ &
    Jacobian matrix $\partial y_i / \partial x_j$ \\
    $\vg \in \bbR_+$ &
    SLIP radial velocity gain (\S\ref{sec:MBHop}) \\
    $\gvg \in \bbR \rightarrow \bbR_+$ &
    Map from radial TD velocity to $\vg$ (\S\ref{sec:SpringEnergization}) \\
    $\gamma : \bbR \rightarrow S^1$ &
    Fore-aft model stance sweep angle (\S\ref{sec:MBHop}) \\
    $\beta : \bbR \rightarrow S^1$ &
    Raibert touchdown angle function (\ref{eq:RaibertController}) \\
    $\hw : \bbR^2 \rightarrow \bbR^2$ &
    Cartesian to Polar TD velocity (\S\ref{sec:AnchoringTemplates}) \\
    \bottomrule
  \end{tabular}
  \normalsize
  \end{center}
\end{table}


\begin{table*}[t]
  \caption{Template Controllers}
  \label{tab:controllers}
  \begin{center}
  \begin{tabular}{@{}lll@{}}
    \toprule
    Tail energy pump &
    $\hyb{g}{v}_1(x) = k_t \cos(\angle x)$ &
    (\ref{eq:TailControl}) \\[2pt]
    Raibert stepping \cite{raibert_legged_1986} &
    $\hyb{g}{fa}_2(\dot x) = \beta^*(\dot x) + k_p (\dot x - \dot x^*)$ &
    (\ref{eq:RaibertController}) \\[2pt]
    Raibert pitch correction \cite{raibert_legged_1986} &
    $\hyb{g}{p}_1(a_1,\dot a_1) = -k_g k a_1 - k_g \dot a_1$ &
    (\ref{eq:HIRCL}) \\[2pt]
    Shape reorientation \cite{johnson_tail_2012} &
    $\hyb{g}{sh}_2(a_2,\dot a_2) = -k_g k a_2 - k_g \dot a_2$ &
    (\ref{eq:HIRCL}) \\
    \bottomrule
  \end{tabular}
  \end{center}
\end{table*}


\subsection{Contributions of the Paper}

This paper contributes both to the theory and practice of dynamical legged locomotion. 

The principal theoretical contributions are: 
\begin{inparaenum}[(i)]
\item a new  (slightly simplified) further abstraction (\S\ref{sec:CouplingSLIP}) of the longstanding SLIP running model \cite{holmes_dynamics_2006} as a formal cross-product of previously proposed vertical \cite{secer_control_2013} and fore-aft \cite{arslan_reactive_2012} templates;
\item a stability proof (modulo a restrictive assumption~\ref{def:PendularAssumptions}) of the parallel composition\footnote{By this term we mean the application to the (coupled) plant $\hyb{p}{s}(x,u)$ (\S\ref{sec:CouplingSLIP}) of a decoupled control law, $u=\hyb{g}{v}(x_1) \times \hyb{g}{fa}(x_2)$, taken directly from~\eqref{eq:TailControl},~\eqref{eq:RaibertController}, respectively.}
of Raibert's \cite{raibert_legged_1986} stepping controller \eqref{eq:RaibertController} with our new energy pump \eqref{eq:TailControl} in Proposition~\ref{prop:SLIPComposition}; and
\item a proof of local stability in the inertial reorientation model \eqref{eq:HIRbody} of the parallel composition \eqref{eq:HIRCL} of Raibert's \cite{raibert_legged_1986} pitch stabilizer and  the tail reorientation controller \cite{johnson_tail_2012} in Proposition~\ref{prop:HIRStability}.
\end{inparaenum}

The empirical contributions of the paper are:
\begin{inparaenum}[(i)]
\item design and implementation of a working tailed biped platform, the Penn Jerboa\iftoggle{techrep}{}{~\cite{CompositionHoppingTR}} (Fig. \ref{fig:robot}); 
\item physical demonstration of the (provably correct--Proposition~\ref{prop:NegativeDamping}) oscillatory spring-energization scheme for vertical hopping; and 
\item experimental evidence supporting the hypothesis that our final  parallel composition   of the  four isolated controllers does indeed anchor the corresponding templates in the Jerboa body (Fig. \ref{fig:compositionPlots}).
\end{inparaenum}


\iftoggle{techrep}{
While the idea of parallel composition is appealing, the difficulty of such a composition arises from the natural transfer of energy between different compartments \cite{Eriksson_1971}\footnote{We use this term here to stand for subsystems (here, disjoint subsets of the physical degrees of freedom) that exchange a resource (here, energy).} in
a mechanical system operating in a dynamical regime. 
In our setting, some degree of coupling across compartments is crucial to the underlying design concept of driving the leg spring through torques generated ``far away'' in the tail. Thus, a naive approach of looking for exactly decoupled body
dynamics is not fruitful\footnote{For instance, for hopping with the tailed monoped, the tail actuator and hip actuator seemingly work on differently ``binned'' tail and leg DOFs, but we energize the robot body with the tail through the leg spring.}. Instead, we analyze stability
properties of (hybrid) closed-loop templates—--which are not
specifically associated to any body--—without paying attention
to the input structure. In agreement with intuition, we
find (\S\ref{sec:PhysicalDecoupling}) that minimization of cross-template transfer of
energy--—through either the flows or the reset maps—--results
in a successful composition.
}{}

\section{Preliminaries: Organization and Notation}
\label{sec:Prelim}


Table \ref{tab:symbols} contains a list of important symbols in this paper, including a set of symbols for describing hybrid dynamical systems. We adopt the modeling paradigm from Definition 1 in \cite{burden_dimension_2011}, representing a hybrid dynamical system by the tuple $(\sD, f, r)$ as defined in Table \ref{tab:symbols}. We only consider two hybrid modes in this paper: ballistic flight, and a stance phase arising from a sticking contact at the ``toe''.

Superscripts on each of these symbols denote the \emph{hybrid template} that it is a part of, e.g. $\hyb{\star}{v}$ for controlled vertical hopping (\S\ref{sec:Radial}). 
The layout of the paper roughly reflects the template-anchor hierarchy
depicted in Fig. \ref{fig:Tree}. Namely, there are two intermediate 2DOF templates---the SLIP, $\tpl{s}$, and the inertial reorientation, $\tpl{a}$----that comprise  the tailed monoped, $\tpl{tm} = \{\tpl{s}, \tpl{a}\}$. They, in turn, are comprised of the vertical, $\tpl{v}$, and fore-aft, $\tpl{fa}$, 1DOF templates, $\tpl{s}=\{\tpl{v}, \tpl{fa}\}$, and respectively, the shape, $\tpl{sh}$, and pitch, $\tpl{p}$, 1DOF templates, $\tpl{a} = \{\tpl{sh}, \tpl{p}\}$. We endow the 1DOF templates at the lowest level with an exemplar plant, with respect to which we will develop controllers for the four template plants, in isolation.

Sections \ref{sec:SLIP}-\ref{sec:HIR} present the 2DOF $\tpl{s}, \tpl{a}$ templates that are directly anchored in the robot body (\S\ref{sec:TailedMonoped}), and within them contain descriptions of the subtemplates (e.g. \S\ref{sec:Radial}, \ref{sec:FA})---as simple exemplar 1DOF anchoring bodies and corresponding control laws---that comprise in isolation the constituent desired limiting behaviors that we seek to embody simultaneously in our physical system. 
Each of the template controllers in this suite is necessarily simple by dint of its origin as a feedback law for a highly abstract 1DOF task exemplar.  We hypothesize that this combination of algorithmic simplicity and task specialization may lend robustness in the empirical setting since control policies are not
sensitive to, and certainly avoid cancellation of, forces arising from dynamical coupling in the anchoring body.

We emphasize that these coupling-na\"{i}ve feedback laws (summarized in Table \ref{tab:controllers}) are simply ``played back'' (modulo scaling) in the 6DOF body (\S\ref{sec:TailedMonoped}) with all its complicated true dynamical coupling. We show formally through various propositions in this paper that nevertheless the stability of the templates and subtemplates persists through composition for the distal segments of the tree (Fig.~\ref{fig:Tree})---SLIP as a composition of vertical hopping and fore-aft speed control, and attitude stabilization as a composition of inertial reorientation and Raibert's pitch control. We provide some preliminary suggestions about the composition of SLIP ($\mathrm{s}$) with attitude ($\mathrm{a}$) compartments (center of Fig.~\ref{fig:Tree}), but a full analysis is left to future work. However, we offer empirical data in \S\ref{sec:Experiments} showing how this idea has resulted in promising qualitative behavior on the Jerboa robot (Fig.~\ref{fig:compositionPlots}, video attachment).


\iftoggle{techrep}{
\begin{table}[t]
  \caption{Physical Parameters (all scalars unless noted)}
  \label{tab:parameters}
  \begin{center}
  \small
  \begin{tabular}{@{}ll@{}}
    \toprule
    $k_t$ &
    Tail gain (\ref{eq:TailControl}) \\
    $k_p$ &
    Raibert speed controller gain (\ref{eq:RaibertController}) \\
    $k$ &
    Inertial reorientation generalized damper gains (\ref{eq:HIRCL}) \\
    $k_g$ &
    Inertial reorientation graph error gain (\ref{eq:HIRCL}) \\
    $\sigma, \omega$ &
    Dissipation, frequency of spring-damper (\S\ref{sec:Radial}) \\
    $\eps$ &
    Saturation parameter for tail controller (\ref{eq:TailControl})) \\
    $\eps_r$ &
    Stability margin for vertical hopping (Proposition \ref{prop:SLIPComposition}) \\
    $\eps_a$ &
    Arbitrarily small orientation error (Proposition \ref{prop:HIRStability}) \\
    $m_b, i_b$ &
    Mass, inertia of robot body (\S\ref{sec:TailedMonoped}) \\
    $\rho_l, \rho_t$ &
    Leg, tail link lengths (\S\ref{sec:SLIP},\ref{sec:TailedMonoped}) \\
    $k_s$ &
    Hooke's law leg spring constant (\S\ref{sec:SLIP},\ref{sec:TailedMonoped}) \\
    \bottomrule
  \end{tabular}
  \normalsize
  \end{center}
\end{table}
}{
\textbf{Note:} Due to space constraints, we have moved the proofs as well as additional experimental results  to a companion technical report~\cite{CompositionHoppingTR}.
}


\section{The (2DOF) SLIP Template}
\label{sec:SLIP}


\subsection{Controlled Vertical Hopping (1DOF)}
\label{sec:Radial}

For a successful hopping behavior, energy must be periodically injected into the robot body to compensate for losses. We simplify the analysis here to a 1DOF vertically-constrained point-mass which can alternate between stance phase (during which the actuator has affordance) and a ballistic (passive) flight phase. It has been shown in the past empirically \cite{raibert_legged_1986} and analytically \cite{koditschek_analysis_1991} that an impulse at the bottom of stance can produce a stable limit cycle, in the presence of a spring for energy storage. In this paper, we consider a different strategy of an actuator forcing the damped spring by applying forces in a phase-locked manner. This choice of input representative is made with an eye toward using a tail actuator exerting inertial reaction forces on the spring (this model is formally instantiated \S\ref{sec:TailedMonoped}).
Intuitively, this can be thought of as negative damping \cite{secer_control_2013} (effectively cancelling losses by physical damping).

Throughout this paper, we make the following assumption inspired by~\cite{raibert_legged_1986}:
\begin{assumption}[Stance duration]
The duration of stance, $T_s$, is approximately constant. 
\end{assumption}
This essentially asserts that the damping losses or actuator forces are relatively small compared to the spring-mass dynamics (in their effect on the liftoff condition).

We build upon the ``linear spring'' analysis in \cite{koditschek_analysis_1991} for our vertical hopping exemplar body and closed-loop template. For a spring-mass-damper system with spring deflection $\chi$, damping coefficient $\bar\beta$ and natural frequency $\omega$
\begin{align}
\label{eq:SpringMass}
\ddot \chi + 2 \omega \bar\beta \dot \chi + \omega^2 \chi = \tau.
\end{align}
%
%
With the change of coordinates $x_1 := \chi$, $x_2 := \dot\chi/\omega$,
\begin{align}
\dot x = \hyb{p}{v}_1(x,\tau) := -\omega J x + e_2^T(-2 \bar\beta \omega x_2 + \tau/\omega),
\end{align}
and the hybrid reset events occur at $x_1 = 0$ (corresponding physically to the touchdown and liftoff events at $\chi =0$).

\subsubsection{Oscillatory Spring Energization}
\label{sec:SpringEnergization}

We choose the physically motivated control strategy
\begin{align}
\label{eq:TailControl}
\tau := \tfrac{\tg x_2}{\Vert x \Vert + \eps} \approx \tg \cos \angle x,
\end{align}
where $\eps>0$ is a small saturation constant. It is clear in this form that the input is a fed-back version of the ``phase'' only. We obtain the closed-loop stance dynamics
\begin{align}
\label{eq:fVH}
\dot x = \hyb{f}{v}_1(x) := - \omega \J x + \left(-2\bar\beta\omega + \tfrac{k_t}{\omega(\Vert x \Vert + \eps)} \right) x_2 e_2.
\end{align}

\begin{figure}[t]
\minipage[b]{\columnwidth}
\centering
\iftoggle{techrep}{
\def\svgwidth{0.9\columnwidth}
}{
\def\svgscale{0.4}
}
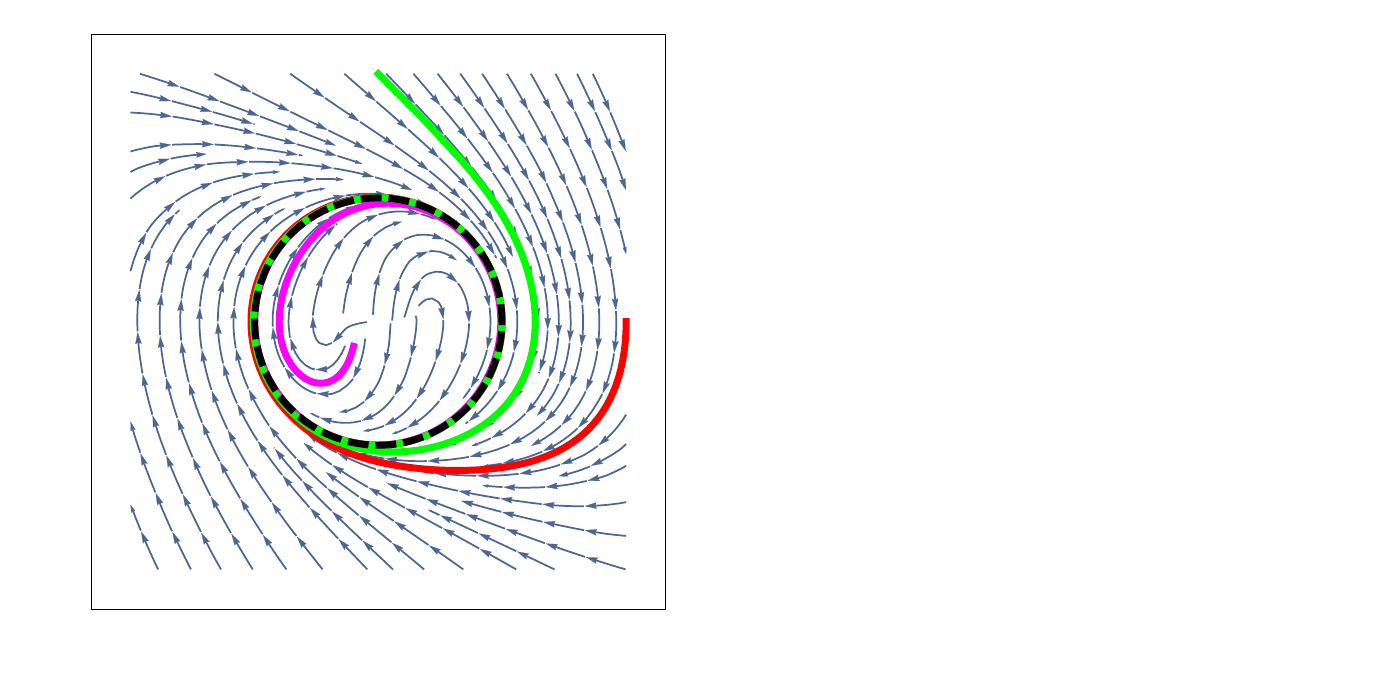
\endminipage\hfill
\caption{{\bf Left:} The vector field and an execution of (\ref{eq:fVH}), showing a stable limit cycle. {\bf Right:} The vertical ``energy'' is easy to tune with $k_t$.}
\label{fig:rccPlot}
\end{figure}

\begin{prop}[Oscillatory energization stability]
\label{prop:NegativeDamping}
The vertical hopping template (\ref{eq:fVH}) has a unique attracting periodic orbit.
\end{prop}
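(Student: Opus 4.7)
The plan is to reduce the hybrid flow generated by~\eqref{eq:fVH} to a scalar Poincar\'e return map on the radial variable $r := \|x\|$ and to show this map has a unique attracting fixed point.

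First I would switch to polar coordinates $x = r(\cos\theta,\sin\theta)^T$, in which~\eqref{eq:fVH} becomes
\begin{align*}
\dot r &= \alpha(r)\, r\sin^2\theta,\\
\dot\theta &= -\omega + \alpha(r)\sin\theta\cos\theta,
\end{align*}
with $\alpha(r) := -2\bar\beta\omega + k_t/[\omega(r+\eps)]$. The reset section $\{x_1 = 0,\, r > 0\}$ is exactly $\theta\in\{\pi/2,-\pi/2\}$, with touchdown at $\theta = \pi/2$ and liftoff at $\theta = -\pi/2$ (the stance-duration assumption asserts the elapsed physical time is approximately $\pi/\omega$). Since $r = |x_2|$ at either event and ballistic flight conserves vertical kinetic energy, the flight phase preserves $r$. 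The entire return map therefore reduces to the stance-induced scalar map $P : r_{\mathrm{TD}} \mapsto r_{\mathrm{LO}}$.

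Next, using $\theta$ as the independent variable (legitimate while $|\alpha|<\omega$, so $\dot\theta$ stays strictly negative) one obtains
\[
\frac{dr}{d\theta} = -\frac{\alpha(r)\, r\sin^2\theta}{\omega - \alpha(r)\sin\theta\cos\theta},
\]
integrated from $\theta = \pi/2$ down to $\theta = -\pi/2$. At leading order in $\alpha/\omega$ this is equivalent to the averaged Lyapunov identity $\tfrac{d}{dt}\tfrac12 r^2 = \alpha(r)\, r^2\sin^2\theta$ with $\langle\sin^2\theta\rangle = 1/2$, yielding the explicit approximation $P(r) - r \approx (\pi/2\omega)\alpha(r)\, r$. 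Because $\alpha$ is strictly decreasing in $r$, the right-hand side is positive for $r+\eps < k_t/(2\bar\beta\omega^2)$, negative for $r+\eps > k_t/(2\bar\beta\omega^2)$, and vanishes at the unique $r^* := k_t/(2\bar\beta\omega^2) - \eps$, which is positive in the natural hopping regime $k_t > 2\bar\beta\omega^2\eps$.

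Linearization at $r^*$ then gives $P'(r^*) = 1 + (\pi/2\omega)\, r^*\alpha'(r^*) = 1 - 2\pi\bar\beta^2\omega^2 r^*/k_t$, which lies strictly inside $(-1,1)$ for any reasonable damping ratio (using $\alpha'(r^*) = -k_t/[\omega(r^*+\eps)^2]$ together with $r^*+\eps = k_t/(2\bar\beta\omega^2)$). Combined with the sign structure of $P(r) - r$ and the monotonicity of $P$ inherited from the sign of $\alpha'$, this upgrades local contraction to global attraction on any compact positive interval of $r$, and hence yields a unique attracting periodic orbit of the hybrid flow. The main obstacle will be to make the averaging step rigorous rather than merely formal: I would need a Gronwall-type bound on the drift of $r(t)$ over a single stance so that replacing $\alpha(r(t))$ by $\alpha(r_{\mathrm{TD}})$ inside the integral contributes only higher-order error, together with verification that stance always terminates (i.e., that $P$ is everywhere defined on positive $r$), which follows once $|\alpha|<\omega$ uniformly on the relevant interval.
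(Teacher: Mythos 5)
Your polar reduction already contains everything needed, but as written the proof pivots on a step you yourself flag as unproven, and that is a genuine gap: you replace the exact radial dynamics by the averaged identity $P(r)-r \approx (\pi/2\omega)\,\alpha(r)\,r$ and then read existence, uniqueness, attraction, and the derivative formula $P'(r^*) = 1 - 2\pi\bar\beta^2\omega^2 r^*/k_t$ off that approximation. Without the Gronwall-type error control you defer, and without justifying the auxiliary hypotheses you introduce along the way ($|\alpha|<\omega$ so that $\theta$ can serve as time, and ``any reasonable damping ratio'' so that $P'(r^*)\in(-1,1)$), none of the conclusions are established; uniqueness in particular cannot be inherited from a leading-order formula, and the averaged derivative can even leave $(-1,1)$ for large $\bar\beta$, which would wrongly suggest instability. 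Note also that the proposition is a statement about the planar vector field \eqref{eq:fVH} itself; the paper handles the hybrid reset and flight phase separately (after the proposition, via oddness of the stance map and Proposition~\ref{prop:RadialStability}), so importing the stance-duration assumption and the flight reset adds hypotheses the statement does not need.

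The fix is that no averaging is required, and it coincides with the paper's one-line argument: the equation you derived, $\dot r = \alpha(r)\, r\sin^2\theta$ (equivalently $x^T\dot x = \alpha(\Vert x\Vert)\,x_2^2$), determines the sign of $\dot r$ pointwise from $\alpha(r)$ alone. Hence the circle $\Vert x\Vert = r^* = k_t/(2\bar\beta\omega^2)-\eps$ is exactly invariant; on it the second term of \eqref{eq:fVH} vanishes identically, the flow is the pure rotation $\dot x = -\omega \J x$, and the circle is an exact periodic orbit. For $0<\Vert x\Vert<r^*$ the radius is nondecreasing and strictly increases except at the isolated instants where $x_2=0$ (which are crossed transversally), and for $\Vert x\Vert>r^*$ it decreases in the same sense, giving attraction from both sides; uniqueness follows because any other periodic orbit would need $\oint \alpha(\Vert x\Vert)x_2^2\,dt = 0$ while the integrand has a fixed sign off the circle, and the only remaining invariant set, the equilibrium $x=0$, is not a periodic orbit. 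This exact argument needs neither $|\alpha|<\omega$, nor a damping-ratio caveat, nor the constant-stance-time assumption.
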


\begin{proof}
\iftoggle{techrep}{
First, note that $x = 0$ is the only equilibrium of~\eqref{eq:fVH}. Secondly, note that 
\begin{align}
x^T \dot x = x_2^2 \left( -2\bar\beta \omega + \tfrac{k_t}{\omega(\Vert x \Vert + \eps)}\right),
\end{align}
which is zero on the set $\Vert x \Vert^* = \tfrac{k_t}{2\bar\beta \omega^2} -\eps$. Additionally, since $x^T \dot x\restr{\Vert x \Vert < \Vert x \Vert^*} > 0$ and $x^T \dot x\restr{\Vert x \Vert > \Vert x \Vert^*} < 0$, this limit cycle is attracting. 
%
%
%
}{
Included in~\cite{CompositionHoppingTR}.
}
\end{proof}


\iftoggle{techrep}{
Writing $x(t, x_0)$ to denote the flow generated by~\eqref{eq:SpringMass}, and letting $\sS(x_0) := \min \{t >0 \mid e_1^T x(t,x_0) = 0 \}$ denote the stance time (since $x_1$, vanishes exactly at the liftoff), we define  the vertical stance map,
\begin{align}
\hyb{F}{v}_1( \dot \chi) := e_2^T x( \sS(\dot \chi,0), (\dot \chi, 0)).
\end{align}
}{}

As a corollary to Proposition \ref{prop:NegativeDamping}, we know $\hyb{F}{v}_1$ 
\iftoggle{techrep}{}{(the vertical stance map, cf.\ \cite{CompositionHoppingTR})} 
has an asymptotically stable fixed point, $\dot\chi^*$, and
$-1 < \D \F{1}{v} \restr{\dot\chi^*} < 1$.


Ballistic flight simply reverses the velocity,
\begin{align}
\label{eq:IsoFlight}
\F{2}{v}(\dot\chi) := -\dot\chi.
\end{align}
Note that by symmetry ($\hyb{f}{v}_1$, and consequently $\hyb{F}{v}_1$ are odd), 
$\hyb{F}{v}_1 \circ \hyb{F}{v}_1 = \hyb{F}{v}_2\circ\hyb{F}{v}_1\circ\hyb{F}{v}_2\circ\hyb{F}{v}_1$,
i.e. the stability properties of the hybrid system are the same as that of the stance map as analyzed in Proposition \ref{prop:NegativeDamping}.
Define
\begin{align}
\label{eq:vgDefn}
\vg = \gvg(\dot\chi) := \tfrac{-\F{1}{v}(\dot\chi)}{\dot\chi},
\end{align}
the effective coefficient of restitution through stance, or the so-called ``velocity gain'' during SLIP stance \cite{arslan_reactive_2012}. Note that there is a unique fixed point, $\vg^* = 1$, in these coordinates, which is necessary and sufficient for the smooth invertibility of $\gvg$, as can be seen by direct computation of its derivative.

Conjugating the  touchdown velocity return map via this
diffeomorphism, we can define a return map for $\vg$, $\hyb{F}{v}$,
\begin{align}
\label{eq:FvgDefn}
\Fvg(\vg) := \gvg\circ\F{2}{v}\circ\F{1}{v}\circ\gvg^{-1}(\vg)
= \gvg(\vg \gvg^{-1}(\vg)).
\end{align}

\begin{prop}[Vertical stability]
\label{prop:RadialStability}
The velocity gain return map, $\hyb{F}{v}$, has an asymptotically stable fixed point, $\vg^* := 1$, and $\D \hyb{F}{v} \restr{\vg=1} = - \D \hyb{F}{v}_1 \restr{\dot\chi^*}$.
\end{prop}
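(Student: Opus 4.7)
The plan is to exploit the fact that $\Fvg$ is defined in \eqref{eq:FvgDefn} as the smooth conjugate of the hybrid stance return map $\F{2}{v}\circ\F{1}{v}$ via the diffeomorphism $\gvg$. Smoothly conjugate maps have identical linearizations (and hence identical local stability properties) at corresponding fixed points, so essentially all of the real work is already done by Proposition~\ref{prop:NegativeDamping} and its corollary.

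First I would verify that $\vg^\ast = 1$ corresponds under $\gvg$ to the physical touchdown fixed point $\dot\chi^\ast$ of the full hybrid return map. Combining the fixed-point condition $\F{2}{v}\circ\F{1}{v}(\dot\chi^\ast) = \dot\chi^\ast$ with $\F{2}{v}=-\mathrm{id}$ from~\eqref{eq:IsoFlight} yields $\F{1}{v}(\dot\chi^\ast) = -\dot\chi^\ast$; substituting into~\eqref{eq:vgDefn} gives $\gvg(\dot\chi^\ast) = -\F{1}{v}(\dot\chi^\ast)/\dot\chi^\ast = 1$, so $\gvg^{-1}(1) = \dot\chi^\ast$ and $\Fvg(1) = \gvg(\gvg^{-1}(1)) = 1$, confirming that $\vg^\ast = 1$ is indeed a fixed point.

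Next I would apply the chain rule to $\Fvg = \gvg\circ\F{2}{v}\circ\F{1}{v}\circ\gvg^{-1}$. Because the outer factor $\D\gvg|_{\dot\chi^\ast}$ and the inner factor $\D\gvg^{-1}|_{1} = (\D\gvg|_{\dot\chi^\ast})^{-1}$ are reciprocals at the corresponding fixed points, they cancel, leaving $\D\Fvg|_{1} = \D(\F{2}{v}\circ\F{1}{v})|_{\dot\chi^\ast} = \D\F{2}{v}|_{-\dot\chi^\ast}\cdot\D\F{1}{v}|_{\dot\chi^\ast} = -\D\F{1}{v}|_{\dot\chi^\ast}$, which is exactly the claimed identity. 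Asymptotic stability of $\vg^\ast = 1$ then follows immediately, since $|\D\Fvg|_1| = |\D\F{1}{v}|_{\dot\chi^\ast}| < 1$ is precisely the derivative bound recorded in the corollary to Proposition~\ref{prop:NegativeDamping}.

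There is no deep obstacle here; the argument is essentially unpacking notation and invoking the fact that smooth conjugation preserves linearizations at fixed points. The only care required is bookkeeping the signs across the hybrid reset (the touchdown velocity $\dot\chi^\ast$ is negative while the liftoff velocity $-\dot\chi^\ast$ is positive), together with the standing observation---made in the paragraph preceding~\eqref{eq:FvgDefn}---that $\D\gvg|_{\dot\chi^\ast}$ is nonzero so that $\gvg$ is a local diffeomorphism in a neighborhood of the fixed point and the conjugacy is actually well-defined.
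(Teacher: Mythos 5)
Your proposal is correct and follows essentially the same route as the paper's (very terse) proof: the paper likewise argues that since $\vg$ and the touchdown velocity are related by the diffeomorphism $\gvg$, stability and the derivative formula follow directly from Proposition~\ref{prop:NegativeDamping} (and its corollary bounding $\D\F{1}{v}\restr{\dot\chi^*}$) together with the trivial flight map $\F{2}{v}=-\mathrm{id}$. Your write-up merely supplies the chain-rule and fixed-point bookkeeping that the paper leaves implicit, so there is nothing substantively different to flag.
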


\begin{proof}
\iftoggle{techrep}{
This directly follows from the observation that $\vg$ and touchdown velocity are related by a diffeo, Proposition \ref{prop:NegativeDamping}, and the simple form of $\hyb{F}{v}_2$ in (\ref{eq:IsoFlight}).
}{
Included in~\cite{CompositionHoppingTR}.
}
\end{proof}


\subsection{Controlled Fore-Aft Speed (1DOF)}
\label{sec:FA}

Running and walking systems of a large variety from the sagittal or frontal plane resemble inverted pendula during stance \cite{holmes_dynamics_2006}, usually controlled by stepping strategies. It has been shown that a fixed touchdown angle can admit a reasonable basin of stability around an emergent attracting steady-state velocity in SLIP \cite{ghigliazza_simply_2005}. The capture point \cite{pratt_capture_2006} and zero moment point \cite{kajita_biped_2003} methods use a quasistatic heuristic which is related to these ideas, but are not explicitly designed to servo to desired nonzero speeds.
We attempt here to place the empirical success of \cite{raibert_legged_1986} in the context of a model where its stability properties can be analyzed.


\subsubsection{The Raibert Stepping Controller}
\label{sec:RaibertController}

In his classical empirical study, Raibert \cite{raibert_legged_1986} inspired decades of subsequent experimentation and analysis by offering the following observations\footnote{These conditions are not a direct result of SLIP's nonlinear dynamics, but are applicable to regime of interest.}
about the pendular stance phase in his running machine travelling at forward speed, $\dot x$, and stepping with a touchdown angle $\beta(\dot x)$ (as in Fig.\ \ref{fig:MBHop}):

\begin{assumption}[Raibert observations]
\label{def:RaibertAssumptions}
\begin{inparaenum}[(i)]
\item For each speed, $\dot x$, there is a neutral\footnote{In this context, ``neutral'' means $\dot x^+ = \dot x$, where $\dot x^+$ refers to the fore-aft speed at the subsequent touchdown event.} touchdown angle, $\beta^*(\dot x)$
\item this neutral angle is monotonic with speed, $\D_{\dot x} \beta^* > 0$, and 
\item deviations from touchdown angle cause negative acceleration, i.e. $\D_\beta (\dot x^+ - \dot x) \restr{\beta=\beta^*} < 0$.
\end{inparaenum}
\end{assumption}

\begin{prop}[Raibert stepping controller]
\label{prop:RaibertController}
Under assumptions \ref{def:RaibertAssumptions}(i-iii), the Raibert stepping controller,
\begin{align}
\label{eq:RaibertController}
\beta : \dot x \mapsto \beta^*(\dot x) + k_p(\dot x - \dot x^*)
\end{align} stabilizes the forward speed to ${\dot x}^*$.
\end{prop}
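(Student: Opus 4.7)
The plan is to analyze the touchdown-to-touchdown return map of the fore-aft speed under the closed-loop controller and reduce the claim to a discrete-time linearization argument. Let $\phi(\dot x,\beta)$ denote the map from a touchdown state (with fore-aft speed $\dot x$ and touchdown angle $\beta$) through one stance/flight cycle to the next touchdown speed; Assumption~\ref{def:RaibertAssumptions}(i) provides a smooth graph $\beta^*$ along which $\phi(\dot x,\beta^*(\dot x)) = \dot x$. Under the Raibert law \eqref{eq:RaibertController}, the closed-loop return map reads
\begin{equation*}
\Phi(\dot x) := \phi\bigl(\dot x,\ \beta^*(\dot x) + k_p(\dot x - \dot x^*)\bigr),
\end{equation*}
and $\Phi(\dot x^*) = \dot x^*$ follows immediately from the neutrality identity.

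The key algebraic step is to compute $D\Phi|_{\dot x^*}$ without needing any explicit formula for the stance dynamics. Differentiating $\phi(\dot x,\beta^*(\dot x)) = \dot x$ in $\dot x$ gives
\begin{equation*}
\partial_{\dot x}\phi|_{\beta=\beta^*} + \partial_\beta\phi|_{\beta=\beta^*}\, D\beta^* = 1,
\end{equation*}
which eliminates $\partial_{\dot x}\phi$ at the neutral curve. Applying the chain rule to $\Phi$ at $\dot x^*$ and substituting produces a clean cancellation of the $D\beta^*$ contribution, leaving
\begin{equation*}
D\Phi|_{\dot x^*} = 1 + k_p\,\partial_\beta\phi|_{\beta=\beta^*(\dot x^*)}.
\end{equation*}
By Assumption~\ref{def:RaibertAssumptions}(iii), $\partial_\beta\phi|_{\beta=\beta^*} = \partial_\beta(\dot x^+ - \dot x)|_{\beta=\beta^*} < 0$, so for any gain $0 < k_p < 2/|\partial_\beta\phi|$ the linearization lies strictly inside $(-1,1)$ and local asymptotic stability of $\dot x^*$ follows from the standard contraction argument for scalar difference equations.

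I anticipate the main obstacle to be interpretive rather than computational: Assumption~\ref{def:RaibertAssumptions} is stated abstractly in terms of the return map $\phi$, so one must argue that $\phi$ is well-defined and smooth on a neighborhood of $(\dot x^*,\beta^*(\dot x^*))$ and that $\partial_\beta\phi$ is bounded away from zero there---only the latter justifies the implicit function theorem step that turned (i) into a smooth graph and that pins down the admissible range of $k_p$. Assumption~\ref{def:RaibertAssumptions}(ii) on monotonicity of $\beta^*$ does not feature in the local linearization, but appears necessary to delineate the regime of touchdown speeds over which these claims remain valid, and so would govern the extent of any basin-of-attraction assertion beyond the local result established here. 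The cleanness of the resulting algebra---no dependence on $D\beta^*$ at the fixed point---is the technical pay-off that makes the Raibert law essentially model-free.
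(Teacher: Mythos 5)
Your proof is correct, and it follows the same basic route as the paper---linearize the closed-loop touchdown return map at $\dot x^*$ and use Assumption~\ref{def:RaibertAssumptions}(iii) to place the derivative inside the unit interval---but the key algebraic step is handled differently, and your version is the more careful one. The paper computes $\D_{\dot x}(\dot x^+ - \dot x) = \D_\beta(\dot x^+ - \dot x)\cdot \D_{\dot x}\beta(\dot x)$, i.e.\ it silently drops the explicit partial of the return map with respect to $\dot x$ at fixed $\beta$, and so its linearization retains the neutral-angle slope: $\D_{\dot x}\dot x^+ \restr{\dot x = \dot x^*} = 1 + \D_\beta(\dot x^+-\dot x)\cdot(\D_{\dot x}\beta^* + k_p)$; stability then leans on Assumption~\ref{def:RaibertAssumptions}(ii) ($\D_{\dot x}\beta^*>0$) for the sign and on an unquantified ``small $k_p$'' (which implicitly also requires $\vert\D_\beta(\dot x^+-\dot x)\vert\,\D_{\dot x}\beta^* < 2$ for the lower bound $-1$). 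You instead keep the full two-argument map $\phi(\dot x,\beta)$ and differentiate the neutrality identity $\phi(\dot x,\beta^*(\dot x))=\dot x$, which eliminates $\partial_{\dot x}\phi$ and cancels the $\D\beta^*$ contribution exactly, leaving $D\Phi\restr{\dot x^*} = 1 + k_p\,\partial_\beta\phi$ with the explicit admissible range $0<k_p<2/\vert\partial_\beta\phi\vert$ and no use of (ii) in the local argument. What your route buys is a linearization that accounts for all dependencies of the return map and a quantitative gain bound; what it costs (as you note) is the need to state smoothness and nondegeneracy of $\phi$ near $(\dot x^*,\beta^*(\dot x^*))$ explicitly, hypotheses the paper leaves implicit. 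Your observation that (ii) only delimits the regime of validity rather than entering the local contraction is a genuine sharpening of the paper's ``sign properties of various terms'' remark.
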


\begin{proof}
\iftoggle{techrep}{
Note that 
\begin{align*}
\D_{\dot x} (\dot x^+ - \dot x) &= \D_\beta (\dot x^+ - \dot x) \cdot \D_{\dot x} \beta(\dot x) \\
&= \D_\beta ({\dot x}^+ - {\dot x}) \cdot (D_{\dot x} \beta^* + k_p) \\
\implies \D_{\dot x} {\dot x}^+ \restr{{\dot x} = {\dot x}^*} &= 1 + \D_\beta({\dot x}^+ - {\dot x}) \cdot (D_{\dot x} \beta^* + k_p).
\end{align*}
From the sign properties of various terms, we note that for small $k_p$, $-1 < \D_{\dot x} {\dot x}^+ < 1$.
}{
Included in~\cite{CompositionHoppingTR}.
}
\end{proof}


\begin{figure}[t]
\minipage[b]{\columnwidth}
\centering
\iftoggle{techrep}{
\def\svgwidth{\columnwidth}
}{
\def\svgscale{2.3}
}
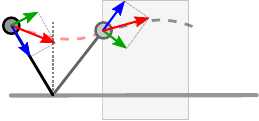
\endminipage\hfill
\caption{A simple model for the 1DOF fore-aft dynamics in SLIP, closely related to BHop \cite{arslan_reactive_2012}.}
\label{fig:MBHop}
\end{figure}


\subsubsection{Modified BHop as a Fore-Aft Model}
\label{sec:MBHop}

Building on existing SLIP literature \cite{schwind_control_1995}, we make the following assumptions about pendular stance:
\begin{assumption}[Pendular stance]
\label{def:PendularAssumptions}
During stance,
\begin{inparaenum}[(i)]
\item the effects of gravity are negligible\footnote{We suspect that the less restrictive Geyer approximation~\cite{geyer_spring-mass_2005} is sufficient, but leave this generalization to future work.} compared to spring potential / damping forces, 
\item radial deflections are negligible, \label{ConstantAngVel}
\item time of stance is constant, \label{ConstantStanceTime} and
\item the angle swept by the leg admits a small-angle approximation. \label{StanceSmallAngle}
\end{inparaenum}
\end{assumption}
Schwind \cite{schwind_control_1995} approximated that angular momentum about the toe is constant during stance, but we simplify further with the second assumption, and conclude that the angular velocity is roughly constant during stance. We adopt the third approximation from Raibert \cite{raibert_legged_1986}, and the last approximation is made for the ensuing analytical simplifications in \S\ref{sec:PhysicalDecoupling}, but we find empirically (\S\ref{sec:Experiments}) that it is not critical in practice.


These assumptions lead directly to the construction of the following return map acting on touchdown velocity in Cartesian coordinates (cf.\ Fig.\ \ref{fig:MBHop}). Then,
\begin{align}
\Fr{s}(v,\vg) &= \mat{1 & \\ & -1} \R(-\gamma+\beta) \mat{1 & \\ & -\vg} \R(-\beta) v \nonumber \\
&= \R(\gamma - \beta) \mat{1 & \\ & \kappa} \R(-\beta) v\label{eq:MBHop},
\end{align}
where $\vg$ (explicitly, the interaction from the radial component of SLIP) is taken to be a fixed parameter at this stage, $\gamma(v_1) \approx \frac{v_1 T_s}{\rho_l}$
is the angle swept by the leg over the course of stance and $\beta(v_1)$ is the leg touchdown angle (\S\ref{sec:RaibertController}). 
This model is only a slight modification\footnote{Specifically, the similarities are apparent between (\ref{eq:MBHop}) and (19) of \cite{arslan_reactive_2012}. The slightly discrepancy should be attributed to our insistence on using the physical touchdown and sweep angles $\beta$ and $\gamma$ in the model, whereas the abstract parameter $\theta$ in \cite{arslan_reactive_2012} results in a more succinct form.} of BHop \cite{arslan_reactive_2012}.

This analytically tractable model 
\begin{inparaenum}[(i)]
\item allows us to ``separate'' the radial dynamics (encapsulated in $\vg$) from the contributions of the fore-aft model itself,
\item captures the exchange of vertical and horizontal energy through stepping, and
\item matches the empirically observed Raibert conditions (Fig. \ref{fig:Fbhcontour}) as well as empirical data (Fig.\ \ref{fig:compositionPlots}), suggesting it is physically applicable and not just an analytical convenience.
\end{inparaenum}

For now we restrict our attention to $\vg = 1$, and generalize to include the radial dynamics in \S\ref{sec:SLIP}. With this restriction,
\begin{align}
\label{eq:BHopIso}
\Fbhiso(v) := \Fr{s}(v,1) = \R(\gamma - 2\beta) v,
\end{align}

While we choose to parameterize the return map as a function of $v\in\bbR^2$, it is really a 1D map:

\begin{prop}[Fore-aft stability]
\label{prop:ForeAftStability}
MBHop with the Raibert controller presents a stable touchdown return map.
\end{prop}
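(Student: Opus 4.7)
The plan is to reduce the nominally 2D return map $\Fbhiso$ to a scalar recursion by exploiting a conservation law, and then invoke Proposition~\ref{prop:RaibertController} directly. First I would observe that $\Fbhiso(v) = \R(\gamma - 2\beta)v$ is a planar rotation, so $\|v\|$ is preserved under iteration; restricting to any level set $\{v : \|v\| = c,\ v_2 < 0\}$ (the physical touchdown branch with downward vertical velocity) gives a 1D smooth arc diffeomorphically parameterized by $v_1 = \dot{x}$. The 2D map therefore descends to a scalar map on fore-aft velocity for each value of $\|v\|$.

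Next I would verify Assumption~\ref{def:RaibertAssumptions} for this reduced recursion. Neutrality $v_1^+ = v_1$ together with the rotation structure forces the rotation angle $\gamma - 2\beta$ to vanish, yielding the closed-form neutral angle $\beta^*(\dot x) = \gamma(\dot x)/2 \approx \dot x\, T_s /(2\rho_l)$. This is smooth, and $\D_{\dot x}\beta^* = T_s/(2\rho_l) > 0$, verifying (i)--(ii). For (iii), differentiating $v_1^+ = v_1 \cos(\gamma - 2\beta) - v_2 \sin(\gamma - 2\beta)$ with respect to $\beta$ and evaluating at $\beta = \beta^*$ gives $\D_\beta(v_1^+ - v_1)\restr{\beta^*} = 2 v_2 < 0$, since $v_2 < 0$ at any physical touchdown.

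With the three Raibert observations in hand, Proposition~\ref{prop:RaibertController} immediately yields scalar asymptotic stability of $\dot x$ to $\dot x^*$; combined with conservation of $\|v\|$ and preservation of the sign of $v_2$ (touchdowns always occur with downward vertical component under Assumption~\ref{def:PendularAssumptions}), this lifts to asymptotic stability of the 2D touchdown state $v$ on each $\|v\|$-level set. The only substantive step I anticipate is justifying the scalar reduction cleanly, i.e., arguing that $\|v\|$ is a genuine invariant of the return map and that the sign of $v_2$ is preserved across the touchdown--liftoff--touchdown cycle; once that passage to a 1D problem is in place, the rest is a direct sign-checking appeal to the earlier proposition.
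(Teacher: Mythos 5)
Your proposal is correct, and it uses the same two ingredients as the paper's proof, just assembled in the opposite order. The paper's first route is literally yours---check Raibert's observations for $\Fbhiso$ and invoke Proposition~\ref{prop:RaibertController}---but there the observations are verified only by appeal to the contour plot of Fig.~\ref{fig:Fbhcontour}, whereas you verify them in closed form ($\beta^*=\gamma/2$, $\D_{\dot x}\beta^* = T_s/(2\rho_l)>0$, $\D_\beta(\dot x^+-\dot x)\restr{\beta^*}=2v_2<0$), which is a genuine sharpening. The paper's second, self-contained route instead computes the full $2\times 2$ Jacobian, $\D\Fbhiso(v^*) = I - 2k_p \J v^* e_1^T$, reads off the eigenvalues $\{1,\,1+2k_p v_2^*\}$, and only afterwards explains the unity eigenvalue via the invariant $\iota(v)=\Vert v \Vert$; you use that same norm invariance a priori to descend to a scalar recursion on each level set. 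Your ordering gives the cleaner conceptual reduction; the paper's ordering produces the explicit contraction factor $1+2k_p v_2^*$, a quantitative margin that is reused almost verbatim in the proof of Proposition~\ref{prop:SLIPComposition}, and which your argument delivers only implicitly through Proposition~\ref{prop:RaibertController}. Two small caveats to tighten: the step ``neutrality forces $\gamma-2\beta=0$'' needs the restriction to the fixed-norm, $v_2<0$ branch (or the small-angle clause of Assumption~\ref{def:PendularAssumptions}) to rule out the second, large-angle solution of $v_1^+=v_1$ for a rotation; and preservation of the sign of $v_2$ need only be argued locally near the fixed point, which is all the (local) stability claim requires and all your argument actually establishes.
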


\begin{proof}
\iftoggle{techrep}{
We can check that $\Fbhiso$ satisfies each of the Raibert conditions (Fig. \ref{fig:Fbhcontour}), thereby concluding automatically from Proposition \ref{prop:RaibertController} that the Raibert controller will ensure local stability.

Alternatively, the utility of our simple analytical model (\ref{eq:MBHop})-(\ref{eq:BHopIso}) is that we can directly compute the stability properties under the Raibert controller (\ref{eq:RaibertController}),
\begin{align}
\D \Fbhiso(v) := \R + \J \R v \cdot (\D\gamma - 2\D\beta) e_1^T,
\end{align}
where $\R$ is evaluated at $\gamma - 2\beta$. By inspection, the (desired) fixed point of (\ref{eq:BHopIso}) is $\beta = \gamma/2$ (this is the neutral touchdown angle). Evaluated at the fixed point,
\begin{align}
\label{eq:FbhisoEval}
\D \Fbhiso(v^*) = I - 2 k_p \J v^* e_1^T = \mat{1 + 2 k_p v_2^* & 0\\- 2 k_p v_1^* & 1},
\end{align}
which is lower-triangular. The eigenvalues are $\{1, 1+2 k_p v_2^*\}$, which capture the local stability of the single fore-aft DOF ($1 + 2 k_p v_2 < 1$) as well as the degeneracy of the map.

To see why the last statement is true, note that we can find a rank 1 map \[
\iota : \bbR^2\rightarrow\bbR : v \mapsto \Vert v \Vert,
\] which is invariant to $\Fbhiso$, i.e. $\iota \equiv \iota\circ\Fbhiso$. Taking a gradient of both sides and using the chain rule, \[
\D \iota \restr{v} = \D \iota \restr{\Fbhiso(v)} \cdot \D \Fbhiso \restr{v}.
\] Evaluating at the fixed point $v^*$, \[
\D \iota \restr{v^*} = \D \iota \restr{v^*} \cdot \D \Fbhiso \restr{v^*},
\] i.e. $\D \iota\restr{v^*}$ is a left eigenvector of $\D \Fbhiso \restr{v^*}$ with unity eigenvalue.

Consequently, under iterations of this map, we get an invariant submanifold spanned by the orthogonal complement of the unity eigenvector, resulting in a ``dimension reduction'' (to a codimension 1 submanifold). In our case, $\Fbhiso$ is really a 1D map, even though its (co)domain in $\bbR^2$.
}{
Included in~\cite{CompositionHoppingTR}.
}
\end{proof}


\begin{figure}[t]
\centering
\iftoggle{techrep}{
\includegraphics[width=\columnwidth]{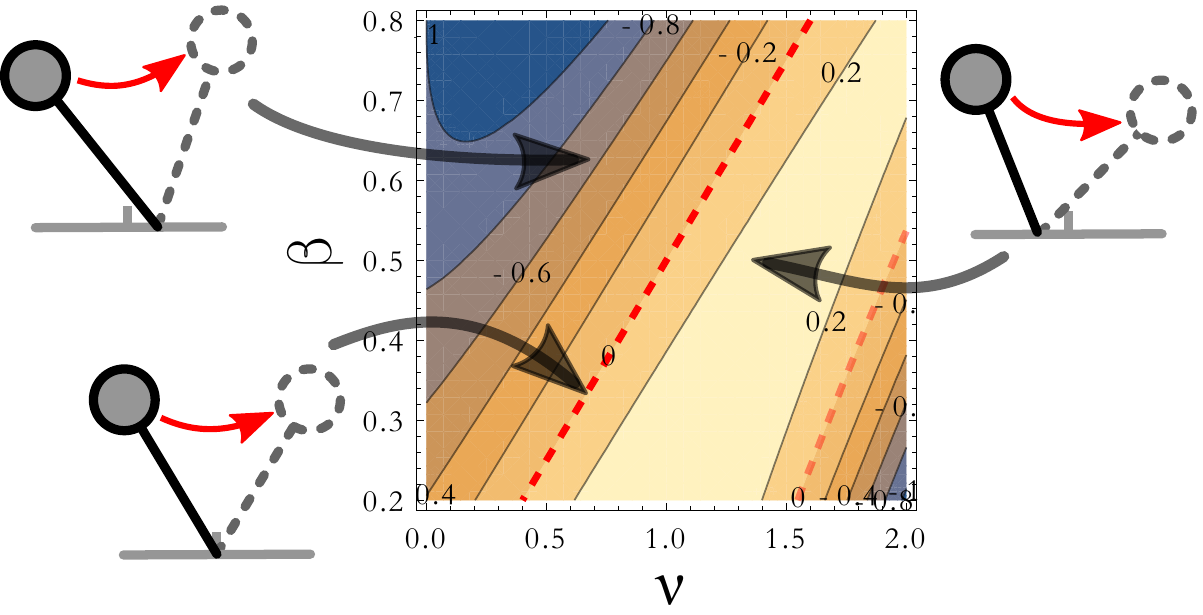}
}{
\includegraphics[width=7cm]{./FbhcontourCartoons}
}
\caption{A contour plot of the fore-aft acceleration ${\dot x}^+-{\dot x}$ produced by the MBHop model for a range of fore-aft speed ${\dot x}$ and touchdown angle $\beta$. This plot depicts that (in a range around the neutral angle), this model captures all the conditions of assumption \ref{def:RaibertAssumptions}.}
\label{fig:Fbhcontour}
\end{figure}


\subsection{SLIP as a Parallel Composition}
\label{sec:CouplingSLIP}

In order to anchor our 1DOF templates in the classical SLIP model (2DOF point mass with 2DOF springy leg), we simply ``play back'' our devised control schemes (Sections \ref{sec:Radial} and \ref{sec:FA}). In the following subsections, we check that the closed-loop executions in the higher-DOF body still resemble a cross-product of our template behaviors.
For instance, prior literature has observed a decomposition of SLIP dynamics into radial and tangential components, but to our knowledge there is no complete account of the stability of the parallelly composed (closed-loop) templates in these components.


\subsubsection{Hybrid Dynamical Model of SLIP}
\label{sec:SLIPHybrid}

We will construct our template plant model from \cite{schwind_control_1995}: a bead of mass 1 at (Cartesian) coordinates $(\hyb{x}{s}, \hyb{z}{s})\in\bbR^2$, with a springy (Hooke's law spring constant $k_s$) massless leg of length\footnote{We use $\theta$ for leg ``joints'' to be consistent with \cite{johnson_legged_2013}.} $\hyb{\theta_2}{s}\in\bbR_+$ (where $\bbR_+$ is restricted to \emph{strictly} positive reals, and is open) and rest length $\rho_l$, at an angle of $\hyb{\theta_1}{s}\in S^1$ from vertical. Let $\hyb{\smq}{s} := (\hyb{\theta_1}{s},\hyb{\theta_2}{s},\hyb{x}{s}, \hyb{z}{s})$. Using assumption \ref{def:PendularAssumptions}(\ref{StanceSmallAngle}) as a convenience (though that assumption is not required for this formulation), the touchdown and lift-off conditions can be specified in terms of the zeros of $\hyb{\sma}{s} := \hyb{z}{s} - \rho_l$.

Define $\hyb{\sQ}{s}_i := S^1\times\bbR_+\times\bbR\times\sI_i$, where $\bbR = \sI_1 \disjunion \sI_2 := (-\infty,\rho_l] \disjunion (\rho_l,\infty)$. Then, $\hyb{\sD}{s}_i := T\hyb{\sQ}{s}_i$, and
\begin{align}
\hyb{f}{s}_1(\hyb{\smq}{s}, \dot{\hyb{\smq}{s}}) &:= \left(\dot {\hyb{\smq}{s}},
\mat{ %
-\frac{2 \dot {\hyb{\theta_1}{s}}\dot {\hyb{\theta_2}{s}}}{{\hyb{\theta_2}{s}}} \\
{\hyb{\theta_2}{s}} \dot {\hyb{\theta_1}{s}}^2 + k_s(\rho_l - {\hyb{\theta_2}{s}}) \\ 
\star
}\right), \label{eq:SLIPStance} \\
\hyb{f}{s}_2(\hyb{\smq}{s}, \dot{\hyb{\smq}{s}}) &:= \left(\dot {\hyb{\smq}{s}}, \mat{\star\\0\\-g}\right), \label{eq:SLIPFlight}
\end{align}
where the unspecified components are
\begin{inparaenum}[(i)]
\item the mass-center dynamics which are constrained by $\mat{{\hyb{x}{s}}\\{\hyb{z}{s}}} = \hyb{\theta_2}{s} \mat{{-\sin \hyb{\theta_1}{s}} \\ {\cos \hyb{\theta_1}{s}}}$ in (\ref{eq:SLIPStance}), and
\item the degenerate massless leg dynamics in (\ref{eq:SLIPFlight}).
\end{inparaenum}

\iftoggle{techrep}{
\paragraph{The Guard Set is $\partial \hyb{\sD}{s}$}

Since $\hyb{\sQ}{s}$ is itself a cross product of Euclidean spaces and Lie groups, we can identify the tangent bundle with a cross product, $T\hyb{\sQ}{s}_i \approx \hyb{\sQ}{s}_i \times \bbR^4$. Then, the boundary of the product space only contains parts from $\sI_i$, which corresponds exactly to the zeros of $\hyb{\sma}{s}$ (\S\ref{sec:SLIPHybrid}).

\paragraph{Reset Maps}

Let us define the functions 
\begin{align}
\Cart &: S^1 \times \bbR_+\rightarrow \bbR^2 : \mat{\theta_1\\\theta_2} \mapsto \theta_2\mat{-\sin\theta_1\\\cos\theta_1} \\
\Pol &: \bbR^2 \rightarrow S^1 \times \bbR_+ : u \mapsto \mat{\angle u \\ \Vert u \Vert}.
\end{align}

The reset maps are defined as 
\begin{align*}
\hyb{r}{s}_1 &: \hyb{\sD}{s}\rightarrow\hyb{\sD}{s} : \mat{\theta\\\dot\theta\\\mat{x\\z}\\\mat{\dot x\\\dot z}} \mapsto \mat{\theta\\\dot\theta\\\Cart(\theta)\\D\Cart\restr{\theta}\cdot \dot\theta},\\
\hyb{r}{s}_2 &: \hyb{\sD}{s}\rightarrow\hyb{\sD}{s} : \mat{\theta\\\dot\theta\\\mat{x\\z}\\\mat{\dot x\\\dot z}} \mapsto \mat{\Pol\left(\mat{x\\z}\right)\\D\Pol\cdot\mat{\dot x\\\dot z}\\\mat{-z\tan\beta(\dot x)\\z}\\\mat{\dot x\\\dot z}}.
\end{align*}
}{
We explicitly write the guard set and reset map in~\cite{CompositionHoppingTR}.
}




\subsubsection{Anchoring the 1DOF Templates}
\label{sec:AnchoringTemplates}

Consequent upon the above model---where each hybrid mode is dynamically 2DOF---SLIP is a 4D dynamical system (one parameterization being $(x,z,v)$, where $v\in\bbR^2$ is the touchdown velocity, and $(x,z)\in\bbR^2$ is the Cartesian location of the point mass at touchdown). The efficacy of our 2D return map analysis is established by arguments similar to those of \cite{schmitt_mechanical_2000}: the Poincare section $z^\mathrm{TD} = \rho_l \cos\beta(v)$ eliminates one dimension, and the equivariance of the dynamics with $x$ eliminates another.

We first observe that our MBHop model of \S\ref{sec:MBHop} still represents the pendular stance correctly under assumption \ref{def:PendularAssumptions}. However, $\vg$ is not a fixed parameter, but evolves according to dynamics similar to $\hyb{F}{v}$ in Proposition \ref{prop:RadialStability}.
From \eqref{eq:RaibertController} and \eqref{eq:MBHop}, the embedded $(\vg = 1, v= v^*)$ submanifold is invariant. We show in Proposition \ref{prop:SLIPComposition} that it is also attracting.

Let us define $\hw : \bbR^2 \rightarrow \bbR^2$ as
\begin{align}
w = \hw(v) := \R(-\beta(v)) v.
\end{align}

\iftoggle{techrep}{
\begin{lemma}
Let $\sV := \{v \in \bbR^2: v_2 < -\frac{2\rho_l}{T_s} \}$. Then $\hw\restr{\sV}$ is a local diffeomorphism.\footnote{Physically, the restriction to $\sV$ means that the hopper must have sufficient vertical component of touchdown velocity, essentially eliminating ``grazing'' ground impacts.}
\end{lemma}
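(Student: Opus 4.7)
The plan is to apply the inverse function theorem, so the entire argument reduces to checking that $D\hw(v)$ is invertible at every $v \in \sV$. Smoothness of $\hw$ is inherited from smoothness of $\beta$ (cf.~\eqref{eq:RaibertController}) and of $\R$.

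First I would compute the Jacobian explicitly. Writing $\hw(v) = \R(-\beta(v))\,v$ and recalling from \eqref{eq:RaibertController} that $\beta$ depends only on $v_1$, the chain rule (together with $\tfrac{d}{d\theta}\R(\theta) = \J\R(\theta)$ and the fact that $\J$ commutes with $\R$) gives, column-by-column,
\begin{align*}
D\hw(v) = \R(-\beta(v_1))\,\bigl[\,e_1 - \beta'(v_1)\,\J v \;\big|\; e_2\,\bigr].
\end{align*}
Using $\J v = (-v_2, v_1)^T$ and $\det\R = 1$, this simplifies to
\begin{align*}
\det D\hw(v) \;=\; 1 + v_2\,\beta'(v_1).
\end{align*}

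Next I would extract $\beta'(v_1)$. From the MBHop fixed-point condition~\eqref{eq:BHopIso}, the neutral angle is $\beta^*(v_1) = \gamma(v_1)/2 = v_1 T_s/(2\rho_l)$, so $\D_{v_1}\beta^* = T_s/(2\rho_l)$, and hence $\beta'(v_1) = T_s/(2\rho_l) + k_p$. Thus the critical threshold for $v_2$ at which the Jacobian degenerates is $-1/\beta'(v_1) = -2\rho_l/(T_s + 2\rho_l k_p)$, which (since $k_p \ge 0$) is at least as negative as $-2\rho_l/T_s$. Consequently, for every $v \in \sV$ we have $v_2 < -2\rho_l/T_s \le -1/\beta'(v_1)$, so $1 + v_2\,\beta'(v_1) < 0$, bounded away from zero. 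The inverse function theorem then yields the claim.

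The one point requiring care—and essentially the only nontrivial step—is the sign bookkeeping that ties the geometric threshold defining $\sV$ to the analytic condition $v_2\beta'(v_1) \ne -1$. The physical reading is that $|v_2|$ must exceed the leg's natural angular-sweep velocity $2\rho_l/T_s$, which is exactly the ``grazing'' regime called out in the footnote; outside this regime the polar-from-Cartesian change of variables at touchdown fails to be locally invertible because a small change in the Cartesian velocity would be absorbed entirely by the feedback adjustment of $\beta$.
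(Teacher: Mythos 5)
Your proof is correct and takes essentially the same route as the paper's: both reduce the claim to nonsingularity of $\D\hw$ at each $v\in\sV$ via the scalar condition $1 + v_2\bigl(\tfrac{T_s}{2\rho_l}+k_p\bigr) < 0$ and then invoke the inverse function theorem, the only cosmetic difference being that you compute $\det \D\hw = 1 + v_2\,\beta'(v_1)$ directly while the paper tests the candidate kernel vector $\R^T\J\R v$. One small wording slip: the critical value $-1/\beta'(v_1) = -2\rho_l/(T_s+2\rho_l k_p)$ is \emph{no more} negative than $-2\rho_l/T_s$ (not ``at least as negative''), which is exactly what your displayed chain $v_2 < -2\rho_l/T_s \le -1/\beta'(v_1)$ correctly uses.
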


\begin{proof}
Note that
\[
\D \hw = \R - \J \R v \D\beta e_1^T,
\]
where $\R$ is understood to be evaluated at $-\beta(v)$.
By inspection, $\D \hw$ could only have a test vector $\R^T \J \R v$ in its kernel, i.e.
\begin{align*}
\D \hw \cdot (\R^T \J \R v) = (1 - \D\beta e_1^T \R^T \J \R v) \J \R v \neq 0,
\end{align*}
since we know $v\neq 0$, $\D\beta = \left(\tfrac{T_s}{2\rho_l} + k_p\right)$ and so
\begin{align*}
1 - \D\beta e_1^T \R^T \J \R v = 1 + v_2\left(\tfrac{T_s}{2\rho_l} + k_p\right) < 0,
\end{align*}
by the conditions assumed on $k_p$. Thus $\D \hw$ is nonsingular, and $\hw$ is a local diffeo.
\end{proof}
The vector $w$ gives a tangential/radial decomposition of $v$ (i.e. polar with respect to the leg angle).
}{
This mapping is a local diffeomorphism (cf.~\cite{CompositionHoppingTR}), and the vector $w$ gives a tangential/radial decomposition of $v$ (i.e. polar with respect to the leg angle).
}

Additionally, using (\ref{eq:vgDefn}), we can ``recover'' the $\vg$-dynamics in the coupled system: $\vg = \gvg(w_2)$. We prefer the redundant $(v, \vg)$ parameterization because of analytical tractability.

\begin{prop}[Stability of SLIP as a composition]
\label{prop:SLIPComposition}
For 
\begin{inparaenum}[(i)]
\item stable vertical hopping with $-1 + \epsvh < -\D\F{1}{v}\restr{*} < 1 - \epsvh$,
\item sufficiently\footnote{Formally, this means that $k_p$ can be chosen as a function of $\epsvh$.} small $k_p$ in the Raibert contoller,
\end{inparaenum}
parallel composition of the radial and fore-aft templates results in a locally stable 2D return map, $\hyb{F}{s}$.
\end{prop}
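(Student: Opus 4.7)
\emph{Proof plan.} The plan is to establish local asymptotic stability of $\hyb{F}{s}$ at $v^*$ by linearization in the polar--adapted coordinates $w=\hw(v)$ of \S\ref{sec:AnchoringTemplates}, and then to close the argument with a standard perturbation bound that leverages each of the two isolated--template stability guarantees (Propositions~\ref{prop:RadialStability} and \ref{prop:ForeAftStability}) separately on the diagonal blocks of the resulting $2\times 2$ Jacobian.

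First I would rewrite the Poincar\'e map in $w$ coordinates: composing (\ref{eq:MBHop}) with the $w$-to-$w^+$ change of basis, the return map takes the form
\begin{align*}
w^+ = \R(\delta)\,\mathrm{diag}(1,\vg(w_2))\,w,
\end{align*}
where $\delta := -\beta(v_1^+) - \beta(v_1) + \gamma(v_1)$ vanishes at the fixed point (since there $\beta^*=\gamma^*/2$), and $\vg(w_2) = \gvg(w_2)$ arises from the redundant parameterization of \S\ref{sec:AnchoringTemplates} with $\vg^*=1$. In these coordinates $w^*$ is a fixed point and the $w^+$-implicit dependence of $\delta$ can be resolved locally by the implicit function theorem, yielding a smooth 2D return map whose Jacobian at $w^*$ is the object of study.

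Next I would compute the $2\times 2$ Jacobian block by block. The diagonal entries should reproduce the two isolated--template linearizations. Using $\gvg(\dot\chi) = -\F{1}{v}(\dot\chi)/\dot\chi$ together with $\F{1}{v}(\dot\chi^*) = -\dot\chi^*$, a short computation gives $\partial w_2^+/\partial w_2\restr{*} = -\D\F{1}{v}\restr{\dot\chi^*}$, which by hypothesis~(i) has magnitude at most $1-\epsvh$; and specializing to $\vg=1$ and tracking the Raibert controller through the angular term $\delta$ yields $\partial w_1^+/\partial w_1\restr{*} = 1 + 2 k_p v_2^*$, strictly less than $1$ since $v_2^* < 0$ at any non-grazing touchdown (compare (\ref{eq:FbhisoEval})). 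The off-diagonal entries capture two cross--coupling effects: a radial excursion $w_2 - w_2^*$ feeds the tangential return through the Raibert contribution to $\partial\delta/\partial w_2$, which is $O(k_p)$ because the neutral touchdown angle satisfies $\D\beta - \D\beta^* = k_p$; and a tangential excursion $w_1 - w_1^*$ enters the radial return only through $\vg(w_2)$, whose $w_1$-dependence vanishes at the fixed point and contributes at most an $O(k_p)$ coupling.

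The stability conclusion would then follow from a standard linear--algebraic perturbation argument: the Jacobian is an $O(k_p)$ perturbation of a diagonal matrix whose entries lie strictly inside the open unit disk by the margin $\epsvh$ (from hypothesis~(i)) and $2 k_p \lvert v_2^*\rvert$ respectively, so choosing $k_p$ small enough as a function of $\epsvh$ keeps both eigenvalues of $\D\hyb{F}{s}\restr{v^*}$ strictly inside the unit disk, whence local asymptotic stability. I expect the principal obstacle to be the rigorous accounting of the off--diagonal entries: one must verify that the algebraic cancellation in $\delta$ responsible for the unit--eigenvalue ``degeneracy'' in the isolated MBHop (see the proof of Proposition~\ref{prop:ForeAftStability}) survives the introduction of variable $\vg$, and that cross--terms involving $\D\gvg\restr{*}$ and $\D\beta\restr{*}$ are not inadvertently double--counted through the redundant $(v,\vg)$ parameterization of \S\ref{sec:AnchoringTemplates}.
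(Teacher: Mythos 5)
Your setup is the same as the paper's: pass to $w=\hw(v)$ coordinates, write the touchdown return map as $\Fw(w)=\R(\cdot)\,\mat{1&\\&\gvg(w_2)}w$, and split its Jacobian at $w^*$ into an ``isolated fore-aft'' part plus the radial coupling entering through $\gvg$. The gap is in your concluding step. You propose to finish with a ``standard perturbation argument'': the Jacobian is an $O(k_p)$ perturbation of a diagonal matrix whose entries are inside the unit disk with margins $\epsvh$ and $2k_p\lvert v_2^*\rvert$, so shrinking $k_p$ keeps the eigenvalues inside. This does not work as stated, because the tangential margin is itself $O(k_p)$ --- the very same order as the off-diagonal coupling you are trying to neglect. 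As $k_p\to 0$ the (1,1) entry tends to $1$, so a Gershgorin or Bauer--Fike style bound (off-diagonal radius $O(k_p)$ versus distance-to-unit-circle $O(k_p)$) is inconclusive no matter how small $k_p$ is; the two effects never separate in scale. What actually saves the argument is the \emph{structure} of the coupling, not its size: the entire $O(k_p)$ term is rank one. Writing $\D\Fw\restr{*}=\mat{1&\\&-\D\F{1}{v}\restr{*}}+pq^T$ with $p=-2k_p\J w^*$ and $q^T=e_1^T\D\hw^{-1}$ (this is how the paper organizes it), the near-unity eigenvalue is displaced inward at first order in $k_p$ (through $e_1^Tpq^Te_1$), while the cross terms you worry about enter the eigenvalues only at order $k_p^2/\epsvh$, because their product is $O(k_p^2)$ and is divided by the spectral gap $1-(-\D\F{1}{v}\restr{*})\ge\epsvh$ guaranteed by hypothesis (i). The paper makes this quantitative without eigenvalue perturbation series by computing $\tr$ and $\det$ of the rank-one--perturbed matrix exactly (matrix determinant lemma), bounding $\lvert p^Tq\rvert\le k_p\Xi$ via the well-conditionedness of $\D\hw$ (the lemma on $\hw\restr{\sV}$), and checking the discrete-time Jury conditions $\det<1$, $\det>\tr-1$, $\det>-\tr-1$ under the choice $2k_p\Xi<\epsvh$. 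This is exactly where hypothesis (i)'s margin $\epsvh$ is consumed and why $k_p$ must be chosen as a function of $\epsvh$ --- a dependence your argument never actually uses.

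Two secondary points. First, your claimed diagonal entries are not quite what comes out in $w$ coordinates: the (1,1) entry is $1+2k_p w_2^*\,e_1^T\D\hw^{-1}e_1$-type, not literally $1+2k_pv_2^*$ as in the isolated computation \eqref{eq:FbhisoEval}, and establishing its sign (inward shift) requires the nondegeneracy of $\hw$ on $\sV$, i.e.\ the non-grazing condition and the bound on $\D\beta$ used in the paper's lemma. Second, your instinct that the unit-eigenvalue degeneracy of the isolated MBHop is the delicate point is right --- but the resolution is the rank-one/Jury computation above, not a smallness-of-$k_p$ argument alone.
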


\begin{proof}
\iftoggle{techrep}{
We choose to perform our stability analysis at a section just after touchdown (in $w = \hw(v)$ coordinates). From (\ref{eq:MBHop}), the return map in $w$-coordinates is
\begin{align*}
\Fw(w) &:= \hw \circ \Fr{s} \circ \hw^{-1}(w)\restr{\vg = \gvg(w_2)} \\
&= \R(\eta(w)) \mat{1 &\\&\gvg(w_2)} w,
\end{align*}
where $\eta := (\gamma - \beta - \beta\circ \Fr{s})\circ \hw^{-1}$. Now,
\begin{align*}
\D \Fw &= \D_w \Fw + \D_\vg \Fw \cdot \D \gvg e_2^T,
\end{align*}
where the first summand can be thought of as loosely the isolated fore-aft subsystem behavior, and the second summand is the perturbation from the radial subsystem. We will evaluate this quantity at the fixed point $w^* = \hw(v^*)$.

Observe that using (\ref{eq:RaibertController}), $\D\eta\restr{*} = -2 k_p e_1^T \D \hw^{-1}$. Proceeding just like in Proposition \ref{prop:RadialStability}, 
\begin{align*}
\D \gvg \restr{*} &= -\frac{1}{w_2^*} \left(1 + \D\F{1}{v}\restr{w_2^*}\right), \\
\D_\vg\Fw &= \R(\eta) e_2 e_2^T w \implies \D_\vg\Fw\restr{*} = w_2^* e_2.
\end{align*}
Lastly, the ``isolated'' term computes similar to (\ref{eq:FbhisoEval}),
\begin{align*}
\D_w \Fw &= \R \mat{1 &\\&\vg} + \J \R \mat{1 &\\&\vg} w \D\eta, \\
\implies \D_w \Fw \restr{*} &= I + \J w^* \D\eta\restr{*}.
\end{align*}

Putting all of these together, 
\begin{align*}
\D \Fw\restr{*} = \mat{1&\\&-\D\F{1}{v}\restr{*}} + p q^T,
\end{align*}
where $p := - 2k_p \J w^*, q^T := e_1^T \D \hw^{-1}$. Using the matrix determinant lemma,
\begin{align*}
\tr \D\Fw &= 1 -\D\F{1}{v}\restr{*} + p^T q\\
\det \D\Fw &= -\D\F{1}{v}\restr{*}\left(1 - q^T \mat{1&\\&-\D\F{1}{v}\restr{*}^{-1}} p\right).
\end{align*}

Now notice that since $\D \hw$ is well-conditioned, we can claim an upper bound on
\begin{align*}
\vert p^T q \vert \le 2 k_p \Vert \J w^* \Vert \Vert \D \hw^{-1} \Vert \le k_p \Xi.
\end{align*}
Also, the quadratic form $q^T \mat{1&\\&-\D\F{1}{v}\restr{*}^{-1}} p$ must have
\begin{align*}
\left\vert q^T \mat{1&\\&-\D\F{1}{v}\restr{*}^{-1}} p \right\vert \le \vert p^T q \vert,
\end{align*}
since $\mat{1&\\&-\D\F{1}{v}\restr{*}^{-1}}$ has norm less than 1.

It can be checked that both eigenvalues are of absolute value bounded by unity iff all of
\begin{inparaenum}[(i)]
\item $\det < 1$,
\item $\det > \tr-1$, and
\item $\det > -\tr-1$
\end{inparaenum}
are true. These inequalities follow from condition (ii) of Proposition \ref{prop:SLIPComposition} and choosing small enough $k_p$ such that $2 k_p \Xi < \epsvh$.
}{
Included in~\cite{CompositionHoppingTR}.
}
\end{proof}



\begin{figure*}[t]
\minipage[b]{\textwidth}
\centering
\iftoggle{techrep}{
\def\svgwidth{0.6\textwidth}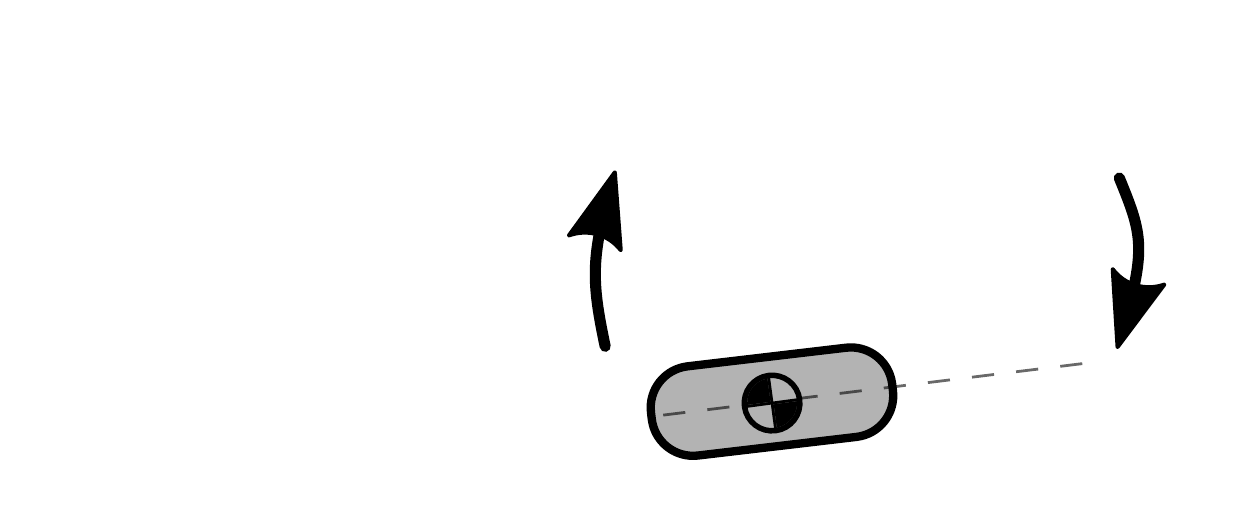 
}{
\def\svgscale{0.55}\import{./}{HIR.pdf_tex}
}
\endminipage\hfill
\caption{A hybrid 2DOF inertial reorientation template with two segments pinned at the CoM and no gravity. {\bf Left:} the net angular momentum of the system is constant. {\bf Right:} the system can correct the net angular momentum using reaction torques on the main body segment, but the tail DOF is subject to an unmodeled disturbance $\lightning$, or $\delta$ in (\ref{eq:HIRbody}).}
\label{fig:HIR}
\end{figure*}


\section{Hybrid Inertial Reorientation (2DOF)}
\label{sec:HIR}

Our decision to energize the hopping behavior with a tail leaves introduces a new actuated DOF whose tight  dynamical coupling to both the mass center and the body orientation dynamics requires its careful control throughout the locomotion cycle. Recent literature \cite{johnson_tail_2012} has seen the development of a 1DOF ``inertial reorientation'' template for correcting the ``shape'' coordinate in a two-link body experiencing free-fall (constrained by conservation of angular momentum). Raibert \cite{raibert_legged_1986} introduced a pitch stabilization mechanism relying on reaction torques from hip actuation during stance. In this paper, we adopt the approach of composing these templates for 2DOF stabilization of appropriately defined ``pitch'' and ``shape'' coordinates of a two-link body/tail model.

Since in the physical system the tail actuator, $\tau_2$, is unavailable for attitude control in stance (because it is being ``monopolized'' as the destabilizing energy source for the SLIP subsystem), and the Raibert pitch correction mechanism (using the hip actuator, $\tau_1$) is unavailable in flight (due to absence of ground reaction force), we present a hybrid inertial reorientation (HIR) template (Fig.\ \ref{fig:HIR}) as the simplest exemplar body on which this 2DOF template is anchored.

We omit the Lagrangian derivation for this familiar subsystem \cite{johnson_tail_2012}, but exploit the fact that when pinned at the CoM, the dynamics are second-order LTI with no Coriolis terms. We perform a change of coordinates (inverting the constant inertia tensor) to obtain the (decoupled) dynamics
\begin{align}
\mat{\ddot a_1\\\ddot a_2} &= \begin{cases}
  \mat{ \tau_1 \\ \delta } & =: \hyb{p}{a}_1(Ta, \tau_1) \quad \text{(stance)}, \\
  \mat{ 0 \\ \tau_2 } & =: \hyb{p}{a}_2(Ta, \tau_2) \quad \text{(flight)} \label{eq:HIRbody},
  \end{cases}
\end{align}
where $(a_1,a_2)$ are the ``pitch'' and ``shape'' coordinates, respectively, and $\delta$ is an unmodeled disturbance term (explicitly added here with an eye toward the use of tail for spring energization in the physical system). In (\ref{eq:HIRbody}) we have now represented HIR as two \emph{independent} subsystems on which two identical 1DOF templates will be anchored in parallel (albeit in alternating stages of the hybrid execution).

Taking advantage of the direct affordance (by which we mean that both of the two decoupled 1DOF systems are completely actuated in, one and then other, of the alternating modes of their hybrid dynamics), we employ a graph-error controller \cite{koditschek_adaptive_1987} as a type of reduction.
Since our reference first-order dynamics are just $\dot a_i = - k a_i$, the independent closed-loop 1DOF subtemplate vector fields, $\hyb{f}{p} : Ta_1 \mapsto \dot{T a_1}$ and $\hyb{f}{sh} : Ta_2 \mapsto \dot{T a_2}$, are defined as
\begin{align}
\label{eq:HIRCL}
\ddot a_i = - k_g(\dot a_i + k a_i) = -k_g k a_i - k_g \dot a_i,
\end{align}
where the gain $k_g$ is understood to be high enough to make the transients of the anchoring dynamics irrelevant.


\subsection{Hybrid Dynamical Model of HIR}
\label{sec:HIRHybrid}

Since the isolated model does not have any intrinsic physical mechanism for transitioning between modes, we add an exogenous clock signal, $\psi_a \in S^1$ such that $\psi_a \in [0,\pi]$ represents stance, and the complement represents flight. In this paper we sidestep the issue of phase-synchronization for the various compartments, but simply use $\psi_a$ to ensure our gains our tuned properly for the timescales of the coupled system (Proposition~\ref{prop:TMComposition}).

Define $\hyb{\sD}{a} = T S^2 \times \{(0,\pi] \disjunion (\pi,2\pi]\}$. Now the closed-loop template dynamics, $\hyb{f}{a} : TS^2\times S^1 \rightarrow T(T S^2 \times S^1)$ can be specified as
\begin{align}
\label{eq:HIR}
\hyb{f}{a}_1(\mat{T a\\\psi_a}) &= \left[
\begin{smallmatrix}
  0 & I & 0\\
  \left[\begin{smallmatrix} -k_g k & 0 & -k & 0 \\ 0 & 0 & 0 & 0 \end{smallmatrix}\right] & 0 & 0\\
  0 & 0 & \omega_a
\end{smallmatrix}\right] \mat{T a\\\psi_a} + \mat{0_{3\times1}\\\delta\\0}, \nonumber\\
\hyb{f}{a}_2(\mat{T a\\\psi_a}) &= \left[
\begin{smallmatrix}
  0 & I & 0\\
  \left[\begin{smallmatrix} 0 & 0 & 0 & 0 \\ 0 & -k_g k & 0 & -k \end{smallmatrix}\right] & 0 & 0\\
  0 & 0 & \omega_a
\end{smallmatrix}\right] \mat{T a\\\psi_a},
\end{align}
the guards sets are $\partial \hyb{\sD}{a} = TS^2 \times \{\{\pi\} \disjunion \{2\pi\}\}$
and the reset maps $\hyb{r}{a}_i = \mathrm{id}$ simply modify the dynamics (\ref{eq:HIRbody}) at $\psi_a=\pi$ (stance to flight) and $\psi_a = 0$ (flight to stance).


\subsection{HIR Stability Analysis}
\label{sec:HIRStability}

Let us denote $\bar\delta[i] := \int \delta dt$, the interval being over the stance phase of stride $i$. Also, define $\delmax = \max_t \bar\delta[t]$.

\begin{prop}[HIR Stability]
\label{prop:HIRStability}
Setting \[
k>\tfrac{2 \omega_a}{\pi}\log\left(1+\delmax/\eps_a\right)
\] results in the desired limiting behavior for $\hyb{F}{a}$: $\Vert a \Vert \rightarrow \sB_{\eps_a}(0)$, a neighborhood of 0 of size $\eps_a$.
\end{prop}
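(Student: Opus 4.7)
The plan is to study the Poincaré return map $\hyb{F}{a}$ at the touchdown section and show that it is eventually contractive into a ball of radius $\eps_a$. First, because $k_g$ is large by standing assumption, I would invoke a singular-perturbation (Tikhonov-type) estimate on the anchoring error $\eta_i := \dot a_i + k a_i$ to conclude that in the mode in which $a_i$ has actuator affordance, the closed-loop dynamics collapse onto the 1D slow manifold $\dot a_i = -k a_i$, with the fast $O(1/k_g)$ transient effectively absorbed into the mode-switch reset. This reduces the cycle to two decoupled mode-wise pictures: during stance, $a_1$ decays exponentially at rate $k$ while $a_2$ drifts under $\ddot a_2 = \delta$; during flight, $a_2$ decays at rate $k$ while $a_1$ drifts freely ($\ddot a_1 = 0$).

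Next I would compose the two mode flows over one full period $2T_m = 2\pi/\omega_a$ to extract decoupled scalar affine return maps in $a_1$ and $a_2$. The $a_1$ map is homogeneous linear and a contraction for every $k>0$, so $|a_1|\to 0$ unconditionally. The $a_2$ map is affine: the homogeneous part has the same contractive structure, and the inhomogeneous forcing is bounded by $\delmax$ via the definition of $\bar\delta[i]$ together with the flight attenuation at rate $k$.

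The attracting fixed point of the $a_2$-recurrence can then be bounded by summing the resulting geometric series. The hypothesis $k > \tfrac{2\omega_a}{\pi}\log(1 + \delmax/\eps_a)$, equivalently $e^{-kT_m/2} < \eps_a/(\eps_a + \delmax)$, is precisely the threshold needed to push this bound below $\eps_a$; combined with $|a_1|\to 0$ this yields the claimed $\Vert a \Vert \to \sB_{\eps_a}(0)$.

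The main obstacle, in my view, is making the singular-perturbation reduction rigorous: the disturbance kicks the state off the slow manifold at every touchdown by a $\delmax$-sized amount, so the Tikhonov estimate must be uniform over the admissible disturbance class, and the residual $O(1/k_g)$ transient must be controlled tightly enough not to spoil the affine bound above. A smaller bookkeeping subtlety---pinning down the precise factor of $2$ in the exponent of the stated $k$-bound---would probably be cleanest via a Lyapunov function symmetric in $a$ and $\dot a/k$, from which a per-half-cycle contraction of $e^{-kT_m/2}$ falls out directly rather than the per-full-cycle factor obtained from the naive composition sketched above.
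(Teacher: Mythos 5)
Your proposal is correct and follows essentially the same route as the paper's proof: the paper likewise treats the high-gain graph-error loop as collapsing each actuated mode onto $\dot a_i = -k a_i$ with free drift in the complementary mode, integrates to obtain the affine touchdown map $F^{\mathrm{a}}(a) = \zeta\left(a + \bar\delta\, e_2\right)$ with $\zeta = e^{-k\pi/\omega_a}(1 - k\pi/\omega_a)$, and sums the geometric series so that $\zeta < 1/(1+\delmax/\eps_a)$ delivers the stated gain bound. The only point worth clarifying is the factor of $2$: in the paper it is not a per-half-cycle Lyapunov contraction but simply slack that absorbs the drift amplification, since with $u = k\pi/\omega_a$ one has $|1-u| \le e^{u/2}$ and hence $|\zeta| \le e^{-u/2}$, so requiring $e^{-u/2} < \eps_a/(\eps_a + \delmax)$ (exactly your restated hypothesis) suffices.
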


\begin{proof}
\iftoggle{techrep}{
Simply integrating the first-order dynamics (\ref{eq:HIR}), we get the touchdown return map $\Fr{a} : S^2 \rightarrow S^2$,
\begin{align}
\Fr{a}(a) = \zeta \cdot \left(a + \bar\delta \mat{0\\1}\right),
\end{align}
where $\zeta := e^{-k\pi/\omega_a} (1-k\pi/\omega_a)$. Iterating this return map, at stride $n\in\bbZ_+$,
\begin{align}
a[n] = \zeta^t a[0] + (\zeta^n \bar\delta[0] + \cdots + \zeta \bar\delta[n-1])e_2,
\end{align}
and using the triangle inequality, 
\begin{align}
\Vert a[n] \Vert \le \vert \zeta\vert^n \cdot \Vert a[0] \Vert + \delmax \left\vert\tfrac{\zeta}{1-\zeta}\right\vert.
\end{align}
Note that $\zeta < \frac{1}{1+\delmax/\eps_a}$ is a sufficient condition to ensure that $\Vert a[t] \Vert \le \eps_a$ asymptotically stable. Some algebra reveals that
\begin{align}
\label{eq:HIRkbound}
k > \tfrac{2 \omega_a}{\pi}\log\left(1+\tfrac{\delmax}{\eps_a}\right)
\end{align}
is, in turn, a condition sufficient to insure that previous inequality involving $\zeta$.
}{
Included in~\cite{CompositionHoppingTR}.
}
\end{proof}




\iftoggle{techrep}{
\section{Physical System: Tailed, Compliant-legged Biped}
}{
\section{Physical System: Tailed Monoped}
}
\label{sec:TailedMonoped}

Our target physical platform is a tailed bipedal robot that we have built\iftoggle{techrep}{}{~\cite{CompositionHoppingTR}}, which (when planarized) we model as shown in the center of Fig.~\ref{fig:Tree}.
We were able to formally show template-anchor relations going from 1DOF to 2DOF templates (Propositions \ref{prop:SLIPComposition} and \ref{prop:HIRStability}), because of the availability of simple models (\S\ref{sec:MBHop}), or trivial dynamics (\S\ref{sec:HIR}). However, as we proceed up the desired hierarchy (Fig.\ \ref{fig:Tree}), there are no easily accessible tools that let us directly analyze the effects of coupling in the return map. In this section, we only show (Proposition \ref{prop:TMComposition}) that under a highly restrictive assumption \ref{def:RobotAssumptions} (that essentially makes the tail sweep negligible), the closed-loop tailed monoped return map $\hyb{F}{tm}$ has an invariant submanifold where it is equal to $\hyb{F}{s} \times \hyb{F}{a}$, but we also leave as conjecture that this invariant submanifold is attracting.


\iftoggle{techrep}{

The first two subsections of this section discuss (in an informal manner) the design process of the robot platform we have designed, built, and implemented the tail-energized hopping behavior on.

\subsection{Jerboa: Design and Construction}
\label{sec:jerboa}

\begin{figure}[t]
\minipage[m]{0.49\textwidth}
\centering
\includegraphics[height=5cm]{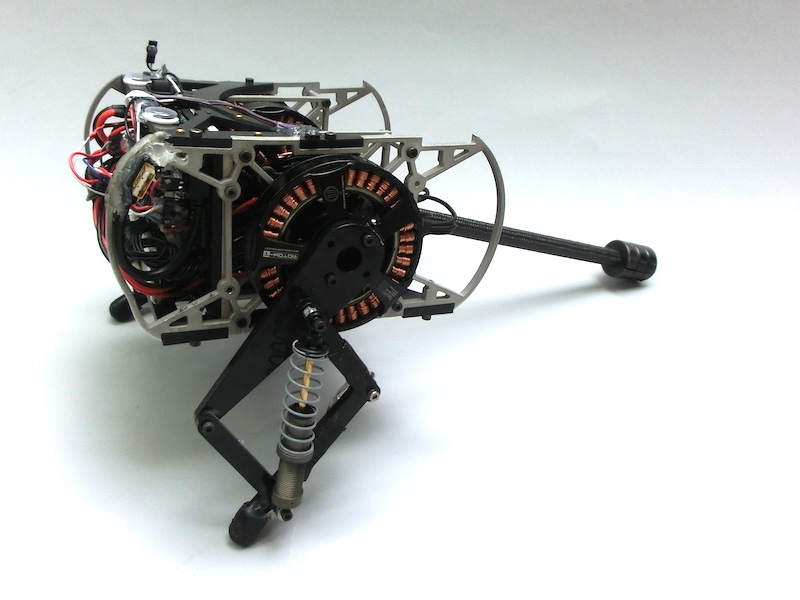}
\endminipage\hfill
\minipage[m]{0.49\textwidth}
\centering
\def\svgscale{0.75}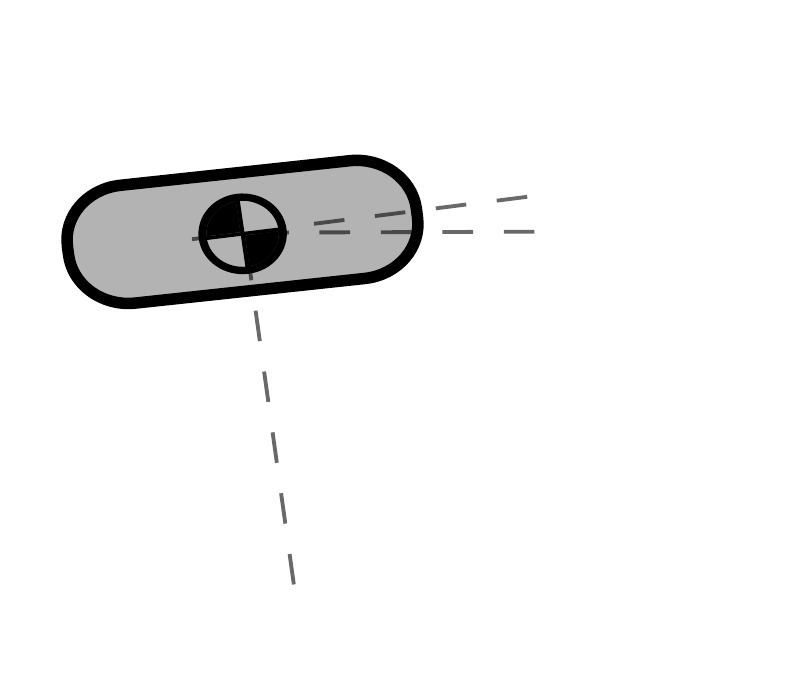
\endminipage\hfill
\caption{The Jerboa is a 2 Kg robot with hip-actuated legs and a 2DOF tail, pictured on the left as it appeared in the experiments of Section~\ref{sec:Experiments}. On the right is our model for the planarized 4DOF system for comparison.}
\label{fig:robot}
\end{figure}

The Jerboa was designed with the goal of being a dynamic, agile robot with an inertial appendage. 
We defer an in-depth discussion of morphological constraints and tradeoffs to future work, but present the following basic design decisions here:
\begin{enumerate}[i)]
\item With an eye on power density constraints\footnote{Adding actuated DOFs parasitically increases mass, but it is not a direct consequence that a proportionate amount of usable power will be added to the robot body by the extra DOFs.}, the robot is underactuated. There are 12 spatial DOFs (6 for the body, 2 for each revolute leg, 2 for the tail) and 4 actuators. When planarized with a boom and virtual constrains on the appendages (as we have done in this paper), there are 6 planar DOFs: 3 for the body, 2 for the single leg and 1 for the pitching tail. Raibert showed that an underactuated robot can be dynamically stable~\cite{raibert_legged_1986}, and in order to have the best performance, we limited the number of actuators on the robot to the minimum number that we believe is required to achieve a wide variety\footnote{We have some preliminary empirical evidence that the Jerboa can quasistatically and dynamically balance, in order to sit, stand, walk, hop, run, turn, leap, etc. Careful investigation of each of these behaviors is planned for future work.} of behaviors.
\item The body has low inertia (due to the mass of the motors being concentrated near the CoM, and the appendages being light), and the actuators are configured such that they can impart correspondingly large accelerations to the body (with an eye towards ``agility''). Future work is planned to reconcile our inclination with emerging definitions of specific agility~\cite{duperret_specific_agility_2014}, but intuitively it seems as if ``integrated magnitude of body acceleration'' is a reasonable metric to aim for.
\item The hips are actuated, but the leg extension is completely passive. This particular form of underactuated leg has been demonstrated to have great versatility in RHex~\cite{saranli_rhex:_2001}, for steady-state running as well as transitional maneuvers~\cite{johnson_toward_2013}.
\item The robot contains an inertial appendage which is endowed with the same amount of power as the hips. Recent research in biomechanics~\cite{libby_tail-assisted_2012} and robotics~\cite{johnson_tail_2012} has demonstrated the utility of tails as inertial ``self-righting'' devices, and on the Jerboa we promote it to a primary source of locomotory energy and control.
\end{enumerate}

In the remainder of this section, we outline the electromechanical aspects of the construction of the robot. A summary of important mechanical measurements is provided in table~\ref{tab:ParameterValues}. 

\begin{table*}[t]
  \caption{Parameter values}
  \label{tab:ParameterValues}
  \begin{center}
  \begin{tabular}{@{}llll@{}}
    \toprule
    Mass (with battery) &
    2.419 Kg &
    Dimensions (without tail) &
    0.21 m (L) $\times$ 0.23 m (W) $\times$ 0.1 m (H) \\
    Tail length &
    0.3 m &
    Tail mass &
    150 g \\
    Leg length &
    0.105 m &
    Leg motor stall torque & 
    3.5 N-m \\
    Peak power density & 
    376 W/Kg &
    Peak (vertical) force density & 
    46 N/Kg \\
    \bottomrule
  \end{tabular}
  \end{center}
\end{table*}

\subsubsection{2DOF tail}

\begin{figure}[t]
\centering
\includegraphics[width=\columnwidth]{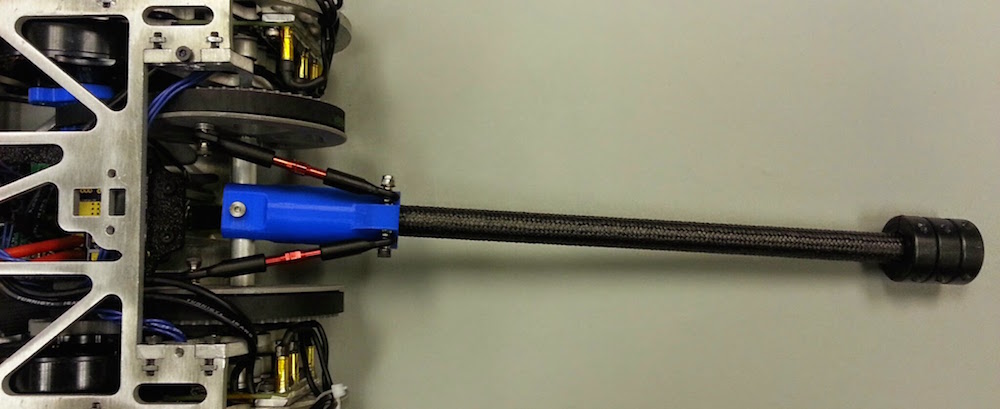}
\caption{The jerboa tail is a 2DOF spherical joint controlled using coaxial motors through a mechanical linkage. Though there are driven sprockets visible in this image, the version of the robot presented in this paper did not have this additional reduction stage.}
\label{fig:jerboataildesign}
\end{figure}

The tail appendage is configured as a 2DOF spherical joint with a point mass at the distal tip. The joint itself is constructed using a linkage (Fig.~\ref{fig:jerboataildesign}) such that identical motor displacements result in a pitching motion, and differential motor displacements result in a yawing motion. The forward kinematics map from motor angles $\mu_1, \mu_2 \in T^2$ to the tail pitch and yaw angles, $\phi_2, \phi_\mathrm{yaw} \in T^2$ has a simple form when restricted to zero yaw (i.e. $\mu_1 = \mu_2$),
\begin{align}
\phi_2(\mu_1, \mu_2)\restr{\phi_\mathrm{yaw}=0} = \mu_1 = \mu_2.
\end{align}
For the behavior under study in this paper, a virtual constraint ensures that $\phi_\mathrm{yaw} = 0$. We leave a full kinematic analysis of the 2DOF mechanism to future work.

\subsubsection{Prismatic-compliant revolute-actuated legs}

\begin{figure*}[t]
\minipage[m]{0.38\textwidth}
\centering
\includegraphics[height=5.8cm]{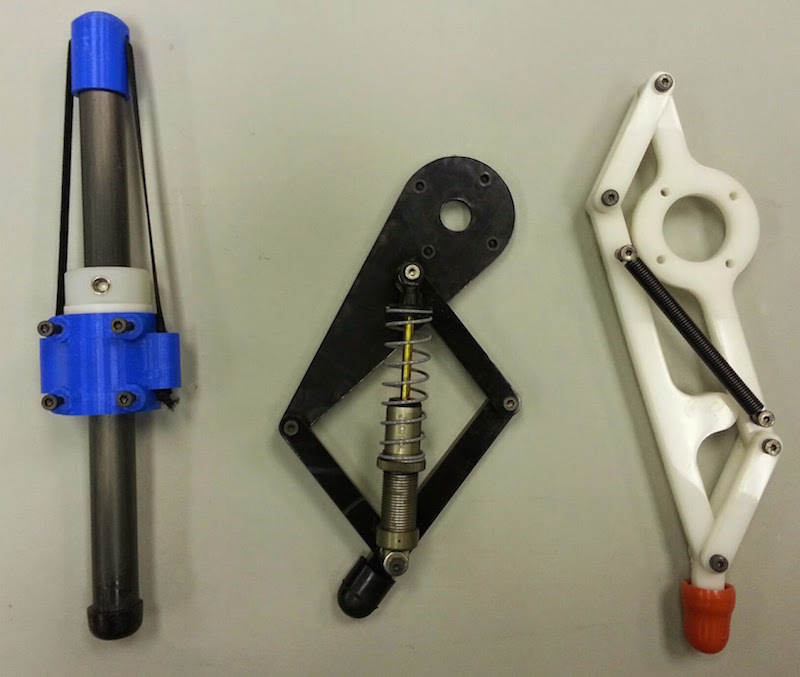}
\endminipage\hfill
\minipage[m]{0.6\textwidth}\small
\centering
\def\svgscale{0.92}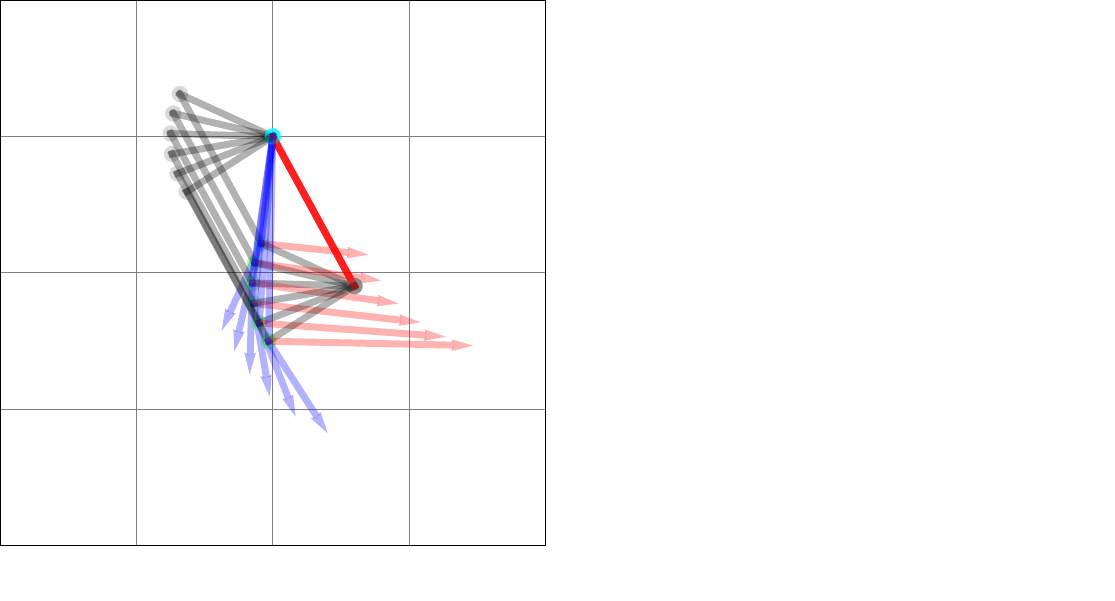 
\endminipage\hfill
\caption{\textbf{Left:} Three leg designs considered for the Jerboa; the prismatic spring is ``ideal'' (in our model of \S\ref{sec:SLIP}, the spring force is dominantly axial, and the actuator force is predominantly tangential) but difficult to manufacture, and the four-bar designs only approximate the desired kinematics. \textbf{Right:} Configuration-dependent Jacobians of the compression and extension spring designs, where the displayed arrows map infinitesimal hip torques and spring extension forces to forces represented by red and blue (resp.) arrows at the toe. Out of these designs, the pictured version of the robot in Fig.~\ref{fig:robot} uses compression springs.}
\label{fig:jerboalegdesign}
\end{figure*}
\begin{figure*}[t]
\minipage[m]{0.3\textwidth}
\centering
\includegraphics[height=3.5cm]{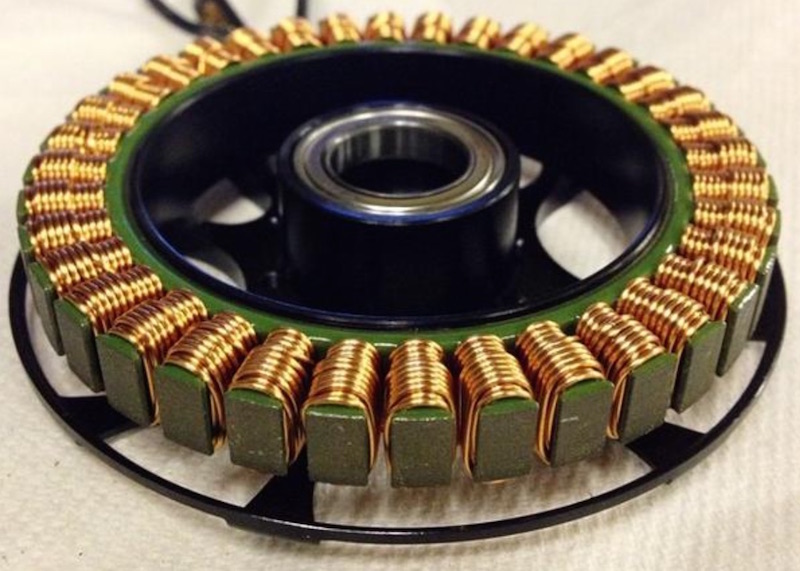} 
\endminipage\hfill
\minipage[m]{0.45\textwidth}
\centering
\footnotesize
\begin{tabular}{@{}lll@{}}
  \toprule
  & Maxon EC-45 & T-motor U8 \\
  \cmidrule{2-3}
  Mass (Kg) &
  0.11 &
  0.24 \\
  Gap radius (mm) &
  21.5 &
  45 \\
  $K_T$ (N-m/A) &
  0.033 &
  0.095 \\
  $K_\mathrm{TS}$ (N-m/Kg$^\circ$C) & 
  0.104 &
  0.5 \\
  \bottomrule
\end{tabular}
\normalsize
\endminipage\hfill
\minipage[m]{0.25\textwidth}\small
\centering
\def\svgscale{1}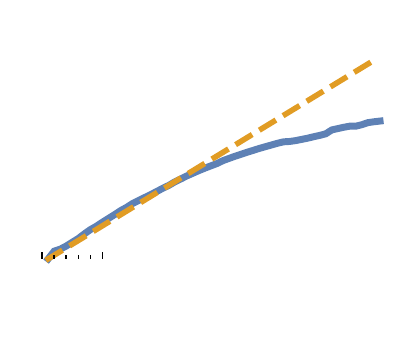 
\endminipage\hfill
\caption{\textbf{Left:} The selected actuator for the Jerboa is the \href{http://www.rctigermotor.com}{T-motor U8}, showing a thin profile and large gap radius---desireable properties for legged applications~\cite{seok2012actuator}. \textbf{Middle:} Motor properties relevant to selection for legged applications for the Jerboa motor, and the X-RHex~\cite{galloway_x-rhex:_2010} motor. \textbf{Right:} A torque-current plot for the U8 when coupled with our custom motor controllers of Fig.~\ref{fig:motorcontroller}, showing flux saturation at higher currents and a dashed line for the nominal torque (predicted by $K_T$).}
\label{fig:tmotor}
\end{figure*}

\begin{figure*}[t!]
\centering
\minipage[m]{0.48\textwidth}
\centering
\includegraphics[height=4cm]{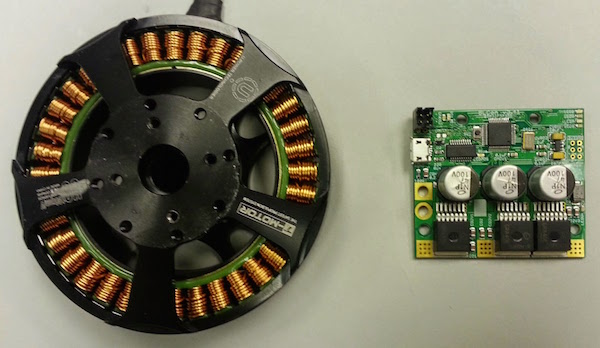} 
\endminipage\hfill
\minipage[m]{0.48\textwidth}
\centering
\includegraphics[height=4cm]{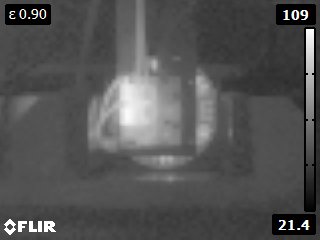} 
\endminipage\hfill
\caption{\textbf{Left:} The physical dimensions of our motor controller when compared to the motors they are driving. \textbf{Right:} Infrared image of of our actuation setup at stall, showing the controller reaching higher temperatures than the motor coils.}
\label{fig:motorcontroller}
\end{figure*}

Even though we adopt the underactuated hip-driven legs from RHex, the legs are chosen to have ``toes'' with point contacts instead of the rolling contact typical of RHex legs for the following reasons:
\begin{enumerate}[i)]
\item Our template plant for fore-aft speed control (\S\ref{sec:MBHop}) is an inverted pendulum with a point contact, and in particular, the toe-placement strategy for fore-aft speed control~\eqref{eq:RaibertController} is only (currently) well-understood for this leg structure.
\item The Raibert pitch controller~\cite{raibert_legged_1986}, which we use as part of our attitude control~\eqref{eq:HIRCL}, depends on a ``rigid'' connection between the hip and the toe. With a series-elastic element that may have torsional compliance (such as a C-leg), the ground reaction force would load up the leg spring, introducing the spring dynamics as a ``lag'' in our pitch control strategy.
\end{enumerate}

The left of Fig.~\ref{fig:jerboalegdesign} shows three leg designs that were considered for the Jerboa: 
\begin{inparaenum}[i)]
\item a prismatic mechanism with a nonlinear elastic element, 
\item a compression spring in a four-bar mechanism, and
\item an extension spring in a four-bar mechanism.
\end{inparaenum}
While the kinematic properties of the first design are the closest to our model (the spring force at the toe is purely radial, and the motor force at the toe is purely tangential), this design proved difficult to construct because of the linear bearing required. The kinematics of the ``approximate'' leg designs are pictorially depicted in Fig.~\ref{fig:jerboalegdesign}.

The experiments for this paper were all performed with the compression-spring legs. The compliant element is an off-the-shelf shock absorber for RC vehicles with lightweight construction, but considerable damping. We believe that the damping in legs was an important limiting factor in the energy of the hopping behavior demonstrated in \S\ref{sec:Experiments}.

\subsubsection{Actuators}

The power generated by electromechanical actuators tends to be at unusably high speeds for legged applications, however at the same time, higher gear reductions are undesirable due to a multitude of reasons~\cite{seok2012actuator}. To this end, we tune our actuator selection to maximize \emph{thermal specific torque}, $K_\mathrm{TS}$---the torque generated by the motor per unit mass per unit temperature rise. This modification to the torque density criterion of~\cite{seok2012actuator} allows us to incorporate the thermal implications of sustained motor activation\footnote{We are assuming a thermal dissipation model for the motor, but not accounting for temperature effects on magnetic flux density.}.
Fig.~\ref{fig:tmotor} contains a table comparing these metrics for the chosen actuator, a T-motor U8, and the one selected for X-RHex~\cite{galloway_x-rhex:_2010}, a Maxon EC-45.

Additionally, we developed custom motor controllers built around Infineon BTN8980 integrated half-bridges and an STM32F373 microcontroller that are
\begin{inparaenum}[(a)]
\item lightweight (20 g),
\item commutate using field-oriented control (FOC) at 25 KHz (adapted from~\cite{piccoli_cogging_2014}), 
\item deliver up to 55 A peak current and up to 40 V peak voltage, and
\item have built-in 12-bit rotor position sensing.
\end{inparaenum}
As a tradeoff for the high power-density of the driving electronics, they are limited by the heat dissipation ability of the half-bridges. Based on some crude testing, we have found that we can source approximately 10 A of steady-state current (thermally limited), corresponding to around 1 N-m of torque.
Fig.~\ref{fig:motorcontroller} compares the physical dimensions and thermal performance of the motor controllers to the motors we have chosen.
We note the following consequences of our selection of motor and driving electronics:
\begin{enumerate}[i)]
\item The high torque density of the chosen motors allows us to completely forgo any static gear-reduction on the Jerboa (although the 2DOF tail makes use of a linkage to transmit power to a spherical joint)---affording benefits of ``transparency'' and eliminating any transmission losses~\cite{seok2012actuator,kenneally_kinematic_2014}.

\item Power dissipation (to heat) in the motors is not a limiting factor in the robot's performance with the current driving electronics.

\item By eliminating the need for gearboxes and judicious chassis design, we have been able to reduce the ``robot framing cost'' to only 40\% of the mass of the robot. To put this in context, only 8\% of the mass of X-RHex is motors~\cite{galloway_x-rhex:_2010}.
\end{enumerate}


Lastly, we highlight some of the design aspects of the Jerboa that are particularly relevant to the subject of this report (tail-energized hopping via parallel playback of decoupled controllers):

\begin{assumption}[Design for decoupled control]
\label{def:RobotDesign}
The design of the Jerboa specifically ensures
\begin{inparaenum}[(i)]
\item leg/tail axes of rotation are coincident at the ``hip,''
\item the tail mass is small, i.e. $m_t \ll m_b$, and
\item center of mass (approximately configuration-independent by the previous assumption) coincides with the hip.
\end{inparaenum}
\end{assumption}

We point out here that these design decisions are less strict than the ones required for our present analysis (assumption~\ref{def:RobotAssumptions}). We believe that the stringency of assumption~\ref{def:RobotAssumptions} is not \emph{necessary}, and provide some empirical evidence to this effect in Section~\ref{sec:Experiments}.

}{}


\subsection{Modeling for Planar Hopping}

Raibert's planar hopper~\cite{raibert_legged_1986} empirically demonstrated stable hopping using a rigid body with a springy leg, and in this paper we pursue the same idea, but instantiate vertical hopping by coupling the 1-DOF leg-spring excitation controller (physically acting through the tail). In flight, the tail actuator grants us a new affordance that we only\iftoggle{techrep}{}{\footnote{We avoid a detailed discussion here, but a revolute tail avoids the morphological specialization of a dedicated prismatic actuator and can be repurposed for other uses such as static standing, reorienting the body in free fall~\cite{johnson_tail_2012}, directing reaction forces through ground contact for leaping when used as another ``leg''~\cite{johnson_toward_2013}, etc.}} use here to regulate the added ``shape'' DOF.
Our physical model is shown in Fig.\ \iftoggle{techrep}{\ref{fig:robot}}{\ref{fig:Tree} (center)}. 
The system has a single massless leg with joints $\theta = (\theta_1, \theta_2) \in S^1\times \bbR_+$, a rigid body $(x, z, \phi_1) \in \setwo$, and a point-mass tail with revolute DOF $\phi_2$, such that the full configuration is $\smq := (\theta_1, \theta_2, x, z, \phi_1, \phi_2) \in \sQ$. 
We make the following design-time assumptions:
\begin{assumption}
\label{def:RobotAssumptions}
\begin{inparaenum}[(i)]
\item Leg/tail axes of rotation are coincident at the ``hip,'' \label{CoMAssumption}
\item tail mass is small, i.e. $m_t \ll m_b$, \label{LightTail}
\item center of mass (configuration-independent by the previous assumption) coincides with the hip, and
\item body, tail have high inertia, i.e. $i_b, i_t \rightarrow \infty$.\footnote{Even though the dynamic task here is quite different from free-fall, in the language of \cite{johnson_tail_2012} this is saying that the tail should be light but \emph{effective}.}\label{BodyTailInertia}
\end{inparaenum}
\end{assumption}

\iftoggle{techrep}{
\subsection{Equations of Motion}

Using the self-manipulation \cite{johnson_legged_2013} formulation of hybrid dynamics, the inertia tensor is 
\begin{align}
\smM = \mat{0 & \\ & \smM_b}, \text{ where } \smM_b := \mat{\smM_1 & \smM_o^T\\\smM_o & \smM_2}.
\label{eq:massMatrices}
\end{align}
Note that $\smM_1 = (m_b + m_t)I$ and $\smM_2 = \mat{i_b+i_t & i_t\\i_t & i_t}$ are constant, and $\smM_o$ contains the critical cross-compartment interaction, by way of which we can use our tail actuator (formally acting on an attitude DOF, $\phi_2$) for energizing the shank DOF, $\theta_2$.

Let the forward kinematics of the leg be $\smg : \theta \mapsto \bbR^2$.
The constraint in the stance contact mode is
\begin{align}
\label{eq:aLeg}
\sma_1(\smq) = \mat{x\\z} - \R(\phi_1) \smg(\theta),
\end{align}
such that $\smA_1(\smq) = \mat{\R \D \smg & I & \J \R \smg & 0}$. In flight mode, $\sma_2(q) \equiv 0$. As in \cite{johnson_legged_2013}, the dynamics can be expressed as
\begin{align}
\mat{\smM & \smA_i^T\\\smA_i & 0} \mat{\ddot \smq\\\lambda} = \mat{\smU - \smN\\0} - \mat{\smC\\\dot \smA_i }\dot \smq.
\end{align}

Define the linear coordinate change $\smh : \sY = \sS \times \sA \rightarrow \sQ$, and $\smH := \D \smh$ such that 
\begin{align}
\label{eq:SMCoordChange}
\smh^{-1} : \smq \mapsto \mat{(\theta_1+\phi_1, \theta_2, x, z)^T \\ \smM_2 \mat{\phi_1 \\ \phi_2} },
\end{align}
and observe that $\smh^{-1}(\smq) = (s,a)$ is reminiscent of SLIP (\S\ref{sec:SLIP}) and attitude (\S\ref{sec:HIR}) coordinates. Define
\begin{align}
\pi_s := \mat{I_4 & 0}\smh^{-1},~~~~\pi_a := \mat{0 & I_2}\smh^{-1}
\end{align}

The equations of motion are generated in the new coordinates,
\begin{align}
\ddot \smy &= \smH^{-1} \smM^\dagger (\smU - \smN) - \smH^{-1} (\smM^\dagger \smC + \smA^{\dagger T} \dot \smA ) \smH \dot \smy.
\end{align}

In stance,
\begin{align}
\mat{\ddot s_1 \\ \ddot s_2} &= \mat{ \frac{\tau_h}{m_b \theta_2^2} - \frac{2 \dot\theta_2 \dot\theta_s}{\theta_2} \\ \frac{k_s (\rho_l - \theta_2)}{m_b} + \theta_2 \dot\theta_s^2} + \tfrac{\tau_t}{\rho_t m_b} \mat{\sin\xi/\theta_2 \\ -\cos\xi}, \label{eq:SLIPInStance}\\
\ddot a &= \mat{ -\tau_h \\ \tau_t } \label{eq:AttInStance},
\end{align}
where $\xi := \theta_1 - \phi_2$ (the tail-leg angle), and the right summand in (\ref{eq:SLIPInStance}) is quite clearly the disturbance caused due to the added attitude degrees of freedom.

With the same choice of $\smH$, we can similarly recover weakly decoupled flight dynamics:
\begin{align}
\mat{\ddot x \\ \ddot z} &= \mat{0 \\ -g} + \tfrac{\tau_t}{\rho_t m_b} \mat{\sin(\phi_1+\phi_2) \\ -\cos(\phi_1+\phi_2)}, \label{eq:SLIPInFlight}\\
\ddot a &= \mat{ 0 \\ \tau_t } \label{eq:AttInFlight}.
\end{align}
}
{
We derive the equations of motion in~\cite{CompositionHoppingTR}.
}


\subsection{``Physical'' Decoupling and Anchoring}
\label{sec:PhysicalDecoupling}

With the highly restrictive assumption~\ref{def:RobotAssumptions} (allowing for infinite tail inertia), the tail motion is essentially negligible. Under these conditions, we show the emergence of the beginnings of a classical anchoring relation~\cite{full_templates_1999}, via a natural (weak) decoupling of the 6DOF dynamics into ``point-mass'' and attitude compartments. A more general analysis that is more physically relevant is forthcoming in future work.

\begin{prop}[Flow-invariant submanifold]
\label{prop:TMInvariantSubmanifold}
Under assumption \ref{def:RobotAssumptions}, in each hybrid mode,
\begin{inparaenum}[(i)]
\item the submanifold $\sU = \{ T\smq \in T\sQ : T\phi_1=T\phi_2=0\}$ is invariant under the action of the flow generated by $\hyb{f}{tm}_i$, and
\item in each hybrid mode, the closed-loop flow restricted to $\sU$, $\dot{T \smq} = \hyb{f}{tm}_i (T\smq\restr{\sU})$ is a cross-product of the template vector fields,
\begin{align}
\hyb{f}{tm}_i = \hyb{f}{s}_i\circ\pi_s \times \hyb{f}{a}_i\circ\pi_a,
\end{align}
where $\pi_s$ and $\pi_a$ represent projections to the SLIP and attitude components of $\smq$ respectively.
\end{inparaenum}
\end{prop}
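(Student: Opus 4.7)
The plan is to verify both claims by direct computation in the decoupling coordinates $\smh$ that already produced (\ref{eq:SLIPInStance})--(\ref{eq:AttInFlight}). The key structural observation is that Assumption~\ref{def:RobotAssumptions}(i),(iii) (coincident leg/tail axes at the hip, CoM located there) eliminates configuration-dependent cross-coupling in the mass-matrix block $\smM_o$ of (\ref{eq:massMatrices}), while (iv) (body/tail inertia $\to\infty$) relegates the physical attitude motion $T\phi$ to a singular limit in which it is effectively frozen.

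For the invariance claim (i), I would verify in each hybrid mode that the closed-loop torques evaluated on $\sU$ yield $\ddot\phi=0$. In flight, the shape-reorientation torque $\tau_t=\hyb{g}{sh}_2(Ta_2)$ vanishes when $Ta_2=0$, while Raibert stepping acts on the leg joint $\theta_1$ rather than on the body $\phi_1$; so by (\ref{eq:AttInFlight}), $\ddot a=0$ and $T\phi$ is preserved. In stance, Raibert pitch correction $\tau_h=\hyb{g}{p}_1(Ta_1)$ also vanishes on $\sU$, but the tail energy pump $\tau_t=\hyb{g}{v}_1(Ts_2)$ generally does not; here one appeals to (iv) to conclude $\ddot\phi=\smM_2^{-1}(-\tau_h,\tau_t)^T\to 0$ as $i_b,i_t\to\infty$, pinning $T\phi$ at zero in the singular limit.

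For the cross-product claim (ii), restricted to $\sU$ the decoupling coordinate $s_1=\theta_1+\phi_1$ collapses to $\theta_1$ and the tail-leg angle $\xi=\theta_1-\phi_2$ collapses to $\theta_1$. The SLIP-compartment equations (\ref{eq:SLIPInStance})--(\ref{eq:SLIPInFlight}) then reduce to the passive hybrid SLIP plant (\ref{eq:SLIPStance})--(\ref{eq:SLIPFlight}) with the tail torque $\tau_t$ entering precisely as the excitation $\tau$ of the vertical hopping template (\ref{eq:fVH}); this is exactly $\hyb{f}{s}_i\circ\pi_s$. Simultaneously, the attitude-compartment equations (\ref{eq:AttInStance})--(\ref{eq:AttInFlight}) are already in the form of the HIR plant (\ref{eq:HIRbody}), with the stance tail torque playing the role of the unmodeled disturbance $\delta$; this is $\hyb{f}{a}_i\circ\pi_a$. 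Stacking the two compartments yields the claimed product.

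The main obstacle is the singular-perturbation argument in part (i): strictly speaking, $\sU$ is only approximately flow-invariant at large-but-finite inertia, and a fully rigorous statement would require a small-parameter estimate in $1/i_b,1/i_t$ bounding the drift of $T\phi$ off $\sU$ over a stride. The deeper companion difficulty---promoting invariance to attractivity, so that $\sU$ actually anchors the cross-product of templates within the physical robot---is the content of the conjecture the authors have explicitly deferred to future work.
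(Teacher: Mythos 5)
Your overall route is the paper's route: work in the decoupling coordinates, note that on $\sU$ the attitude states vanish so the graph-error controllers output zero torque (hence $\tau_h\restr{\mathrm{stance}}=0$ and $\tau_t\restr{\mathrm{flight}}=0$), invoke the high-inertia limit of assumption \ref{def:RobotAssumptions}.\ref{BodyTailInertia} to freeze $T\phi$ against the one torque that does not vanish (the stance-phase tail pump), and then identify the restricted equations with the SLIP and HIR template fields. The paper's proof does exactly this, after explicitly assigning the scaled controllers $\tau_h\restr{\mathrm{stance}}=-\hyb{g}{p}_1$, $\tau_h\restr{\mathrm{flight}}=\hyb{g}{fa}_2$, $\tau_t\restr{\mathrm{flight}}=\hyb{g}{sh}_2$, and $\tau_t\restr{\mathrm{stance}}=-\rho_t\theta_2 m_b\,\hyb{g}{v}_1(\dot z)$.

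There is, however, a concrete gap in your part (ii). On $\sU$ the tail--leg angle indeed collapses to $\xi=\theta_1$, but that alone does not give the cross-product: the stance coupling term in \eqref{eq:SLIPInStance} is $\tfrac{\tau_t}{\rho_t m_b}\mat{\sin\xi/\theta_2 \\ -\cos\xi}$, so with $\xi=\theta_1\neq 0$ the energizing tail torque also injects a \emph{tangential} force $\tfrac{\tau_t\sin\theta_1}{\rho_t m_b\theta_2}$ into the leg-angle dynamics, and the radial forcing is modulated by $\cos\xi$ (and by $\theta_2$ through the chosen scaling). Neither effect is present in the template fields \eqref{eq:SLIPStance} or \eqref{eq:fVH}, so as stated your claim that $\tau_t$ enters ``precisely as the excitation $\tau$'' fails, and the restricted stance flow is not literally $\hyb{f}{s}_1\circ\pi_s\times\hyb{f}{a}_1\circ\pi_a$. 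The paper closes this step by invoking the pendular-stance small-angle approximation, assumption \ref{def:PendularAssumptions}.\ref{StanceSmallAngle}, which on $\sU$ gives $\xi\approx 0$ (so $\sin\xi\approx 0$, $\cos\xi\approx 1$), together with $\theta_2\approx\rho_l$ from assumption \ref{def:PendularAssumptions}.\ref{ConstantAngVel}, so that the tail torque appears purely radially and matches the vertical template's input under the stated scaling; add that invocation and your argument coincides with the paper's. Two smaller remarks: the weak decoupling of the equations of motion comes from the light-tail assumption \ref{def:RobotAssumptions}.\ref{LightTail}, not from assumptions (i)/(iii) ``eliminating'' $\smM_o$---$\smM_o$ is precisely the retained channel through which the tail pumps the leg spring; and your closing caveat about only-approximate invariance at finite inertia is consistent with the paper, which treats the limit $i_b,i_t\rightarrow\infty$ as exact within the idealized model.
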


\begin{proof}
\iftoggle{techrep}{
Applying assumption \ref{def:RobotAssumptions}.\ref{LightTail} to the equations of motion, the plant dynamics $\hyb{p}{tm}(T\smq,(\tau_h,\tau_t))$ are
\begin{align}
\ddot\theta\restr{\mathrm{stance}} &= \mat{ {\color{BrickRed} \frac{\tau_h}{m_b \theta_2^2} } - \frac{2 \dot\theta_2 \dot\theta_s}{\theta_2} \\ \frac{k_s (\rho_l - \theta_2)}{m_b} + \theta_2 \dot\theta_s^2 } + {\color{BrickRed} \tfrac{\tau_t}{\rho_t m_b} \mat{\sin\xi/\theta_2 \\ -\cos\xi} }, \nonumber\\
\ddot a\restr{\mathrm{stance}} &= \mat{ -\tau_h \\ \tau_t }, \nonumber\\
\mat{\ddot x \\ \ddot z}\restr{\mathrm{flight}} &= \mat{0 \\ -g} + {\color{BrickRed} \tfrac{\tau_t}{\rho_t m_b} \mat{\sin(\phi_1+\phi_2) \\ -\cos(\phi_1+\phi_2)} },\nonumber\\
\ddot a\restr{\mathrm{flight}} &= \mat{ 0 \\ \tau_t },\label{eq:BodyOL}
\end{align}

We can check that we have available affordances through our two actuators to assign (scaled versions of) our template controllers in Table \ref{tab:controllers},
\begin{inparaenum}[(i)]
\item $\tau_h\restr{\mathrm{stance}} = -\hyb{g}{p}_1(a_1,\dot a_1)$ to control $a_1$, and $\tau_h\restr{\mathrm{flight}} = \hyb{g}{fa}_2(\dot x)$ to control $\dot x$, and
\item $\tau_t\restr{\mathrm{flight}} = \hyb{g}{sh}_2(a_2,\dot a_2)$ to control $a_2$, and $\tau_t\restr{\mathrm{stance}} = -\rho_t \theta_2 m_b \cdot \hyb{g}{v}_1(\dot z)$ to control hopping height\footnote{We observe that by assumption \ref{def:PendularAssumptions}.\ref{ConstantAngVel}, $\theta_2 \approx \rho_l$ is roughly constant, so the scaling need not be configuration dependent.}.
\end{inparaenum}

Under assumptions \ref{def:PendularAssumptions}.\ref{StanceSmallAngle} and \ref{def:RobotAssumptions}.\ref{BodyTailInertia}, we show that the {\color{BrickRed} highlighted terms} in (\ref{eq:BodyOL}) vanish inside $\sU$:
\begin{enumerate}[i)]
\item $\smM_2 \rightarrow \infty$, so in the dynamics equations $\ddot a = 0$. Restricted to $\sU$, $a\equiv 0$. This proves part (i) of the claim.
\item From $\ddot a\equiv 0$ and (\ref{eq:HIRCL}), $\tau_h \restr{\mathrm{stance}}=\tau_t \restr{\mathrm{flight}}=0$.
\item Since $\phi_2 = 0$, $\xi = -\phi_1 \approx 0$ (from assumption \ref{def:PendularAssumptions}.\ref{StanceSmallAngle}).
\end{enumerate}

By comparing the thus-restricted plant dynamics (\ref{eq:BodyOL}) to (\ref{eq:SLIPStance}), (\ref{eq:SLIPFlight}) and (\ref{eq:HIRbody}), we obtain part (ii) of the result.
}{
Included in~\cite{CompositionHoppingTR}.
}
\end{proof}
Additionally, the invariant submanifold in the flow leads to an invariant submanifold in the hybrid execution:



\begin{prop}[Return map-invariant submanifold]
\label{prop:TMComposition}
The set $\sU$ is invariant under the return map $\hyb{F}{tm}(T\smq\restr{\sU})$, and restricted to $\sU$, $\hyb{F}{tm} = \hyb{F}{s}\circ\pi_s\times\hyb{F}{a}\circ\pi_a$.
\end{prop}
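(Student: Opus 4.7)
My plan is to lift the flow-level decomposition already established in Proposition~\ref{prop:TMInvariantSubmanifold} to the hybrid return map. A full stride of $\hyb{F}{tm}$ is a composition of four ingredients --- stance flow, liftoff reset, flight flow, touchdown reset --- so it suffices to show that each ingredient (a) preserves $\sU$, and (b) restricted to $\sU$ splits as the corresponding SLIP factor cross-producted with the corresponding attitude factor. The invariance and cross-product claims for $\hyb{F}{tm}$ then follow from composing these four facts around one stride.

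The continuous pieces are handled immediately by Proposition~\ref{prop:TMInvariantSubmanifold}: each $\hyb{f}{tm}_i$ restricts to $\hyb{f}{s}_i\circ\pi_s \times \hyb{f}{a}_i\circ\pi_a$ on $\sU$, so the timed flows $\hyb{F}{tm}_i$ inherit the same product structure provided the stopping times match. For that, I would verify that the physical guard $\sma_1=0$ of~\eqref{eq:aLeg} reduces, on $\sU$ (where $\phi_1=0$), to precisely the SLIP guard $z-\rho_l\cos\theta_1=0$ of \S\ref{sec:SLIPHybrid}; hence physical touchdown/liftoff times equal those that the SLIP template alone would produce on $\pi_s(T\smq)$. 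For the reset maps, inspection of the hybrid model shows that both $\hyb{r}{tm}_1$ (liftoff, converting polar leg data back to Cartesian) and $\hyb{r}{tm}_2$ (touchdown, imposing the Raibert angle $\beta(\dot x)$ and converting back to polar) act only on the SLIP coordinates, leaving $(\phi_1,\phi_2,\dot\phi_1,\dot\phi_2)$ untouched. Comparison with the maps of \S\ref{sec:SLIPHybrid} yields $\pi_s\circ\hyb{r}{tm}_i = \hyb{r}{s}_i\circ\pi_s$, and on the attitude side $\pi_a\circ\hyb{r}{tm}_i = \pi_a = \hyb{r}{a}_i\circ\pi_a$ because $\hyb{r}{a}_i = \mathrm{id}$ by \S\ref{sec:HIRHybrid}. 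In particular $\sU$ is preserved by each reset.

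The main obstacle is clock-synchronization between the attitude subsystem and the physical transitions. The HIR template uses an exogenous clock $\psi_a$ advancing at rate $\omega_a$ to switch between the stance- and flight-mode controllers in~\eqref{eq:HIR}, whereas the physical robot inherits these transitions from foot contact. On $\sU$ this is benign, since $a\equiv 0$ there and both modes of $\hyb{f}{a}$ produce trivial dynamics, but to assert that $\hyb{F}{tm}\restr{\sU}$ \emph{literally} equals $\hyb{F}{s}\circ\pi_s\times\hyb{F}{a}\circ\pi_a$ one must identify $\omega_a$ with the period imposed by the SLIP guard so that $\psi_a$ increments by $\pi$ over each phase. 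This is exact only on the limit cycle --- assumption~\ref{def:PendularAssumptions}(iii) provides its approximate validity elsewhere --- so a fully rigorous version of the claim probably requires re-stating HIR with its guards induced from the ambient SLIP guard rather than from $\psi_a$. Once this identification is made, chaining the four ingredients around a stride delivers the desired $\sU$-invariance and the cross-product factorization of $\hyb{F}{tm}\restr{\sU}$.
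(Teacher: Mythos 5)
Your proposal is correct and follows essentially the same route as the paper: lift the flow-level product structure of Proposition~\ref{prop:TMInvariantSubmanifold} through the resets and the physically-induced guard to the stride map, with the only delicate point being the HIR clock. The modification you flag as ``probably required'' is exactly what the paper's proof does---it redefines the attitude subsystem with $\hyb{\widetilde\sD}{a}_i := TS^2\times S^1$ (empty boundary, so $\psi_a$ no longer triggers transitions) and a reset $\widetilde{\hyb{r}{a}_i}$ that snaps $\psi_a$ to $i\pi$, so the SLIP guard drives the HIR transitions, adding only the remark that this invalidates the $\omega_a$-dependent bound on $k$ and requires choosing $k$ large enough for the shortest feasible stance time so that Proposition~\ref{prop:HIRStability} still applies.
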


\begin{proof}
\iftoggle{techrep}{
We first define the return map $\hyb{F}{tm}$ by instantiating a ``cross-product'' hybrid system $(\hyb{\sD}{tm}, \hyb{f}{tm}, \hyb{r}{tm})$ as
\begin{inparaenum}[(a)]
\item $\hyb{\sD}{tm} := \hyb{\sD}{s} \times \hyb{\widetilde\sD}{a}$,
\item $\hyb{r}{tm} := \hyb{r}{s} \times \hyb{\widetilde r}{a}$, and
\item $\hyb{f}{tm}$ as defined in Proposition \ref{prop:TMInvariantSubmanifold},
\end{inparaenum}
where $\hyb{\widetilde\sD}{a}_i := T S^2 \times S^1$ for each $i$ (ensuring $\partial \hyb{\widetilde\sD}{a} = \emptyset$) and $\hyb{\widetilde r}{a}_i : \hyb{\widetilde\sD}{a}_i \rightarrow \hyb{\widetilde\sD}{a}_{i+1}$ is defined
\begin{align}
\widetilde{\hyb{r}{a}_i} : \mat{Ta\\\psi_a} \mapsto \mat{Ta\\i \pi \bmod{2\pi}}.
\end{align}
With these modifications, the $\psi_a$ dynamics (\ref{eq:HIR}) are ignored, and the clock of the HIR subsystem is being driven by the SLIP subsystem\footnote{This coupling interaction importantly invalidates the $\omega_a$-dependent bound on $k$ (\ref{eq:HIRkbound}). Our solution is to scale the input such that $k$ is high enough for the shortest feasible transition time in vertical hopping.}. This ensures that the conditions of Proposition \ref{prop:HIRStability} still hold, i.e. $\pi_a\circ\hyb{F}{tm} = \hyb{F}{a}\circ\pi_a$.

Additionally, the decoupled nature of $\hyb{f}{tm}\restr{\sU}$ (Proposition \ref{prop:TMInvariantSubmanifold}) allows us to conclude that $\pi_s\circ\hyb{F}{tm} = \hyb{F}{s}\circ\pi_s$, so that \[
\hyb{F}{tm} = \pi_s\circ\hyb{F}{tm} \times \pi_a \circ \hyb{F}{tm} = \hyb{F}{s} \circ \pi_s \times \hyb{F}{a}\circ\pi_a,
\]
which concludes the proof.
}{
Included in~\cite{CompositionHoppingTR}.
}
\end{proof}
We leave to future work a proof that $\sU$ is attracting, which is a requirement for demonstration of anchoring~\cite{full_templates_1999}.






\section{Experimental Results}
\label{sec:Experiments}
 

\iftoggle{techrep}{
In this section we present empirical data obtained from the Jerboa (\S\ref{sec:jerboa}). In the first three subsections, we present data from a few ``nodes'' of our composition tree (Fig.~\ref{fig:Tree}). Finally, a crucial examination of our idea of composition of templates, when implemented on the Jerboa, is presented in \S\ref{sec:ExperimentsComposition}.

\subsection{Effect of Varying Tail Mass on Vertical Hopping}

\begin{figure*}[t]
\centering
\def\svgscale{0.8}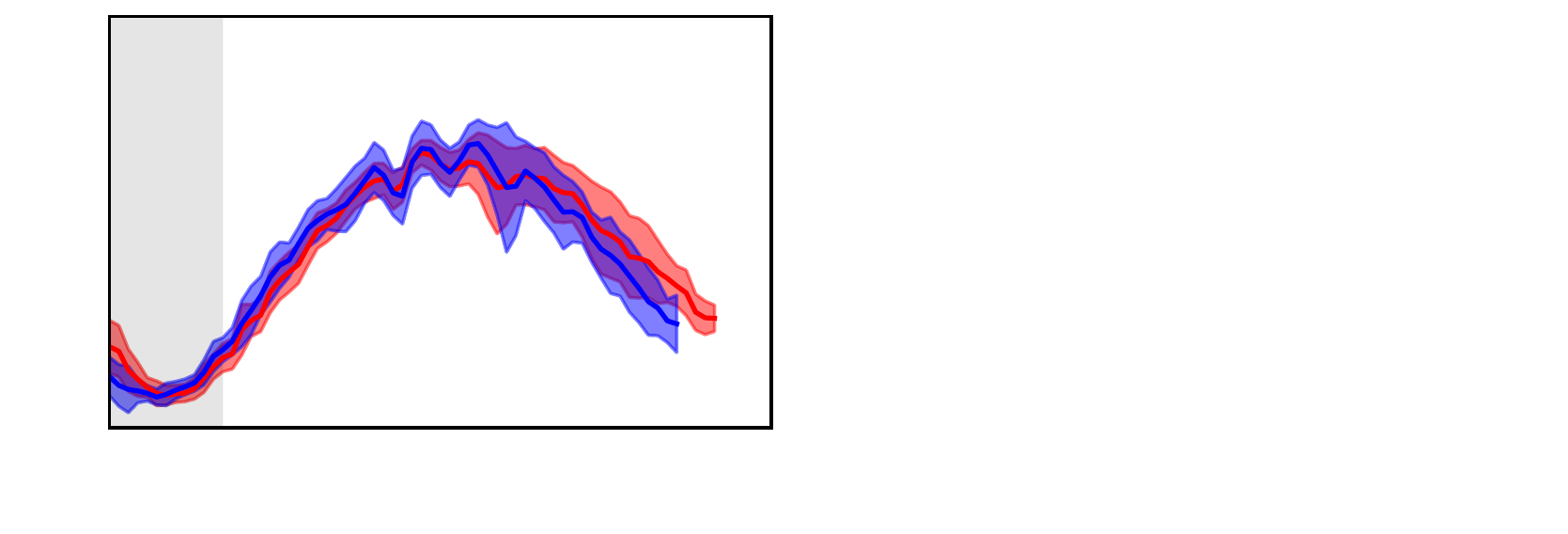 
\caption{Two datasets corresponding to different tail masses: The blue traces use the $m_t = 150$ g (as in Table~\ref{tab:ParameterValues}), but the red traces use $m_t = 100$ g. Note that the tail displacement is larger for the lighter tail mass, although vertical behavior is largely unaffected.}
\label{fig:tailmass}
\end{figure*}

The first empirical result we present corresponds to the top left leaf of Fig.~\ref{fig:Tree}---empirical vertical hopping. In order to facilitate the analysis in this paper, in assumption~\ref{def:RobotAssumptions} we stipulated an ideally effective~\cite{johnson_tail_2012} tail, with negligible mass and infinite inertia. We connected the robot (Fig.~\ref{fig:robot}) to a boom and constrained the body pitch as well as the fore-aft DOF. By varying the tail mass (with a fixed tail length given in Table~\ref{tab:ParameterValues}), we obtained two vertical hopping datasets plotted in Fig.~\ref{fig:tailmass}.

We observe the following:
\begin{enumerate}[i)]
\item Increasing tail mass results in smaller tail displacements. Taken to the limit, this sheds some light on assumption~\ref{def:RobotAssumptions}: a large tail mass would indeed render the tail motion negligible.

\item The hopping height remains relatively unchanged in spite of this physical variation. From~\eqref{eq:SLIPInStance}, the force acting on the leg-spring depends only on the (feedforward) tail torque, $\tau_t$ (as in Table~\ref{tab:controllers}).
\end{enumerate}

Consequently, we see that the tail mass is a \emph{tunable design parameter} that allows us to trade off the conditions of assumption~\ref{def:RobotAssumptions} (negligible mass versus large inertia---both affecting coupling interactions) without affecting the vertical behavior.

\subsection{Empirical Validation of Attitude-Decoupling Change of Coordinates}
\label{sed:expPitchShape}

\begin{figure*}[t]
\centering
\def\svgscale{0.8}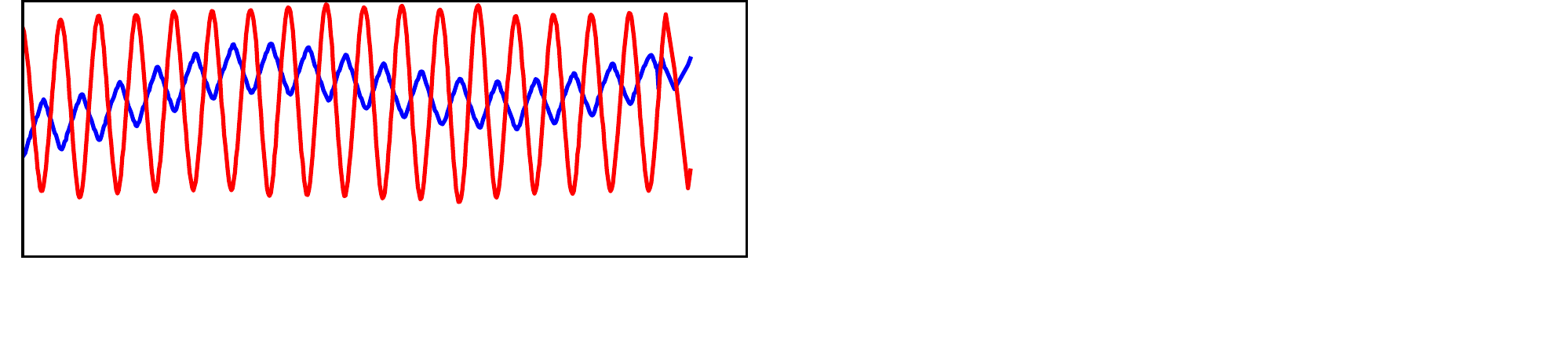 
\caption{Testing our decoupling change of coordinates from the physical body pitch, tail angle coordinates $(\phi_1,\phi_2)$ to our chosen attitude coordinates $(a_1, a_2)$ by suspending the robot about its CoM (see \S\ref{sed:expPitchShape}).}
\label{fig:pitchshape}
\end{figure*}

An important foundation of our attitude control strategy is the decoupling of the two attitude DOFs (\S\ref{sec:HIR}), such that $a_1$ is controlled in stance, and $a_2$ is controlled in flight~\eqref{eq:HIRbody}. However, the body pitch and tail angle are clearly coupled in flight\footnote{Since the tail actuator is attached between the body and the tail, tail torques are felt by the body.}. To resolve this, as shown in~\eqref{eq:SMCoordChange}, we use $\smM_2^{-1}$ as a decoupling change of coordinates.

In terms of implementation this strategy requires the estimation of a single scalar parameter that defines $\smM_2$ up to scale (see the text just after~\eqref{eq:massMatrices}). To test our the change of coordinates empirically, we suspended the robot about the CoM and applied a feedforward sinusoidal $\tau_t$ signal. The resulting traces for the physical attitude coordinates are shown in Fig.~\ref{fig:pitchshape}. 

Recall from~\eqref{eq:AttInFlight} that in flight, $\ddot a_1 = 0$. In practice, we observe from the right of Fig.~\ref{fig:pitchshape} that there are small $a_1$-variations are at a much slower time scale than $a_2$-variations. The reason that $\ddot a_1$ is not zero is that we were unable to suspend the robot at precisely the CoM, and so gravity exerts a net moment on the body---appearing as a slow $a_1$-oscillation. Other than this minor deviation of our physical platform from assumption~\ref{def:RobotAssumptions}, it appears as if the attitude-decoupling change of coordinates is indeed effective.

\subsection{Trading off Forward Speed and Hopping Height for ``Leaping''}
\label{sec:experimentsForeAft}

\begin{figure*}[t]
\centering
\includegraphics[width=15cm]{./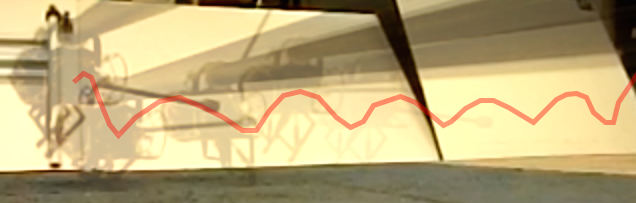}
\def\svgscale{0.8}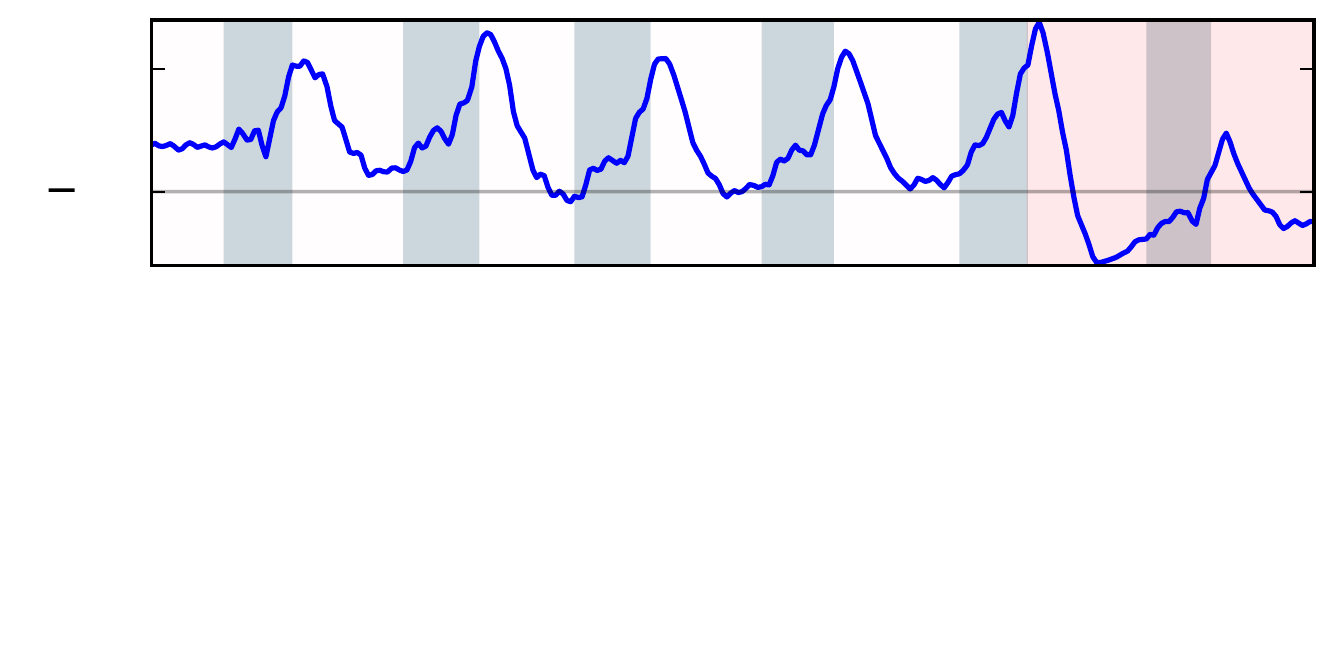 
\caption{\textbf{Top:} Snapshots of fore-aft hopping behavior in a trial where we test a ``leaping'' motion~\cite{raibert_legged_1986}---the robot stubs its toe at the last touchdown in order to gain a boost in vertical height at the expense of forward speed (see \S\ref{sec:experimentsForeAft}). The red line shows the CoM-trajectory of the robot. \textbf{Bottom:} Corresponding traces showing near-steady-state behavior in the fore-aft compartment (leg angle, $\theta_1$ and vertical height, $z$ are plotted) before the ``stubbing'' event (red overlay). The leg angle shows the ``neutral angle'' with a thin horizontal line, and in order to leap; note that a much larger (in magnitude from vertical) touchdown angle is chosen in order to leap. The leg height ($z$) plot shows the robot getting around 50\% larger apex height in the subsequent flight phase.}
\label{fig:foreaft}
\end{figure*}

The ``stepping'' fore-aft control using the touchdown angle as a control input~\eqref{eq:RaibertController} essentially allows us to trade off vertical and fore-aft energy---appearing as a pure rotation in~\eqref{eq:BHopIso}. Even though for steady-state behavior we choose the touchdown angle to stabilize forward speed, it also allows for transient behaviors such as a one-shot ``leaping'' motion (term coined by Raibert~\cite{raibert_legged_1986}). In particular, choosing a larger (in magnitude from vertical) touchdown angle than that dictated by~\eqref{eq:RaibertController} results in added vertical height and reduced fore-aft speed. 

The results of an empirical test of this one-shot leaping strategy are showing in Fig.~\ref{fig:foreaft}: we can indeed get a large increase in apex height using this strategy. This kind of ``asymmetry''~\cite{raibert_legged_1986} or deviation from steady-state may have important applications in behaviors that require rapid changes in the body energy, and we plan to explore more such behaviors in future work.

\subsection{Empirical Validation of Composition}
\label{sec:ExperimentsComposition}

}{
We perform the experiments on the Penn Jerboa: a new tailed bipedal robot platform (Fig.\ \ref{fig:robot}) with a pair of compliant hip-actuated legs (in parallel for sagittal plane behaviors), and a 2DOF revolute point-mass tail \cite{johnson_tail_2012} driven differentially by two motors through a five-bar mechanism (locked in the sagittal plane for the behaviors in this paper). We include a detailed design report
as well as additional experimental results including the effect of varying tail mass, and empirical validation of our pitch/shape decomposition of \S\ref{sec:HIR}
in~\cite{CompositionHoppingTR}.
}

\begin{figure*}[t]
\minipage[m]{\textwidth}
\centering
\iftoggle{techrep}{
\def\svgscale{1.0}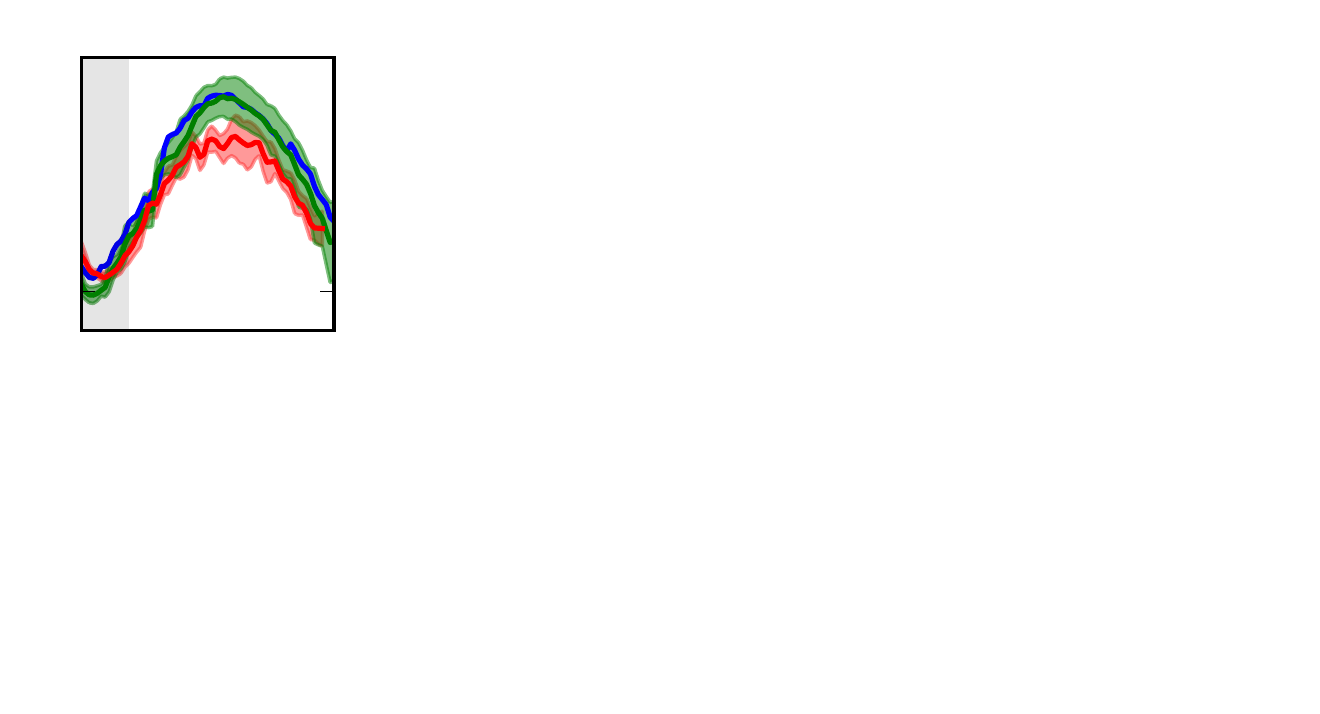 
}{
\footnotesize
\def\svgscale{0.63}\import{./}{compositionPlots.pdf_tex} 
}
\endminipage\hfill
\caption{A single stride (stance with shaded background followed by flight), where each column corresponds to some representative time series from each of the four 1DOF templates from \S\ref{sec:SLIP}-\ref{sec:HIR}, and the traces (mean and standard deviation) correspond to different ``bodies'' realized by variably constraining the robot---
{\bf red:} tailed vertical hopper (i.e. $(\theta_1,x,\phi_1)$ locked), {\bf green:} tailed point-mass hopper (i.e. $\phi_1$ locked), {\bf blue:} tailed planar hopper (all free)---in which these templates are being anchored.}
\label{fig:compositionPlots}
\end{figure*}

By physically constraining some of the DOFs, we test our hierarchical composition (Fig.\ \ref{fig:Tree}) at as many ``nodes'' of the composition tree as possible. Note that it is infeasible to isolate the fore-aft or the closed-loop pitch correction templates in a physical setting. The results are summarized in Fig.\ \ref{fig:compositionPlots}. Five strides are averaged within each category, and aligned with ground truth knowledge of the touchdown event. We observe that
\begin{enumerate}[i)]
\item there is a vertical limit cycle that retains its rough profile and magnitude through three anchoring bodies,
\item the hip angle roughly satisfies $\ddot\theta_1=0$ in stance and the stance duration is roughly constant (corroborating assumptions \ref{def:PendularAssumptions}.\ref{ConstantAngVel}-\ref{ConstantStanceTime}, and our MBHop model (\ref{eq:MBHop}),
\item the shape coordinate is destabilized in stance and stabilized in flight, and the pitch-deflections are small in magnitude over the stride, and in agreement with (\ref{eq:HIR}).
\end{enumerate}

Qualitatively, the ``tailed point-mass hopper'' configuration attained stable forward hopping at controlled speeds upwards of 20 strides, only limited by space. The fully unlocked system has so far hopped for about 10 strides at multiple instances before failing due to accumulated error causing large deviations from the limit cycle. We believe the prime reason for this is that the CoM is significantly aft of the hip (violating assumption \ref{def:RobotAssumptions}.\ref{CoMAssumption}). We attempted to compensate for this effect with a counterbalance visible in Fig.\ \ref{fig:robot}, but an unacceptably large weight would have been required to completely correct the problem.

In the video attachment, we include clips of the robot hopping along a boom, with varying degrees of physical constraint corresponding to the ``bodies'' of Fig.~\ref{fig:compositionPlots} (annotated in the video). The controller implemented on the hardware is agnostic of the physical constraint, and takes the decoupled form of a cross-product of the rows of Table~\ref{tab:controllers}. 


\section{Discussion and Conclusion}

Raibert's hopper \cite{raibert_legged_1986} made significant empirical advances in the field of  robotics, but to our knowledge, no
previous  account in the literature has provided any formal conditions under which such simple and decoupled control strategies will work.
In this paper, we apply simple decoupled controllers using similar ideas (including the exact same fore-aft (\ref{eq:RaibertController}) and pitch (\ref{eq:HIR}) controllers), but with a new vertical hopping scheme (\S\ref{sec:Radial}) and a new tail appendage to enable it. Moreover, we construct abstract models (that appear to, nevertheless, be representative of empirical data) that enable us to present analyses of stability for each of these subsystems, and make steps  towards a local proof of stability for the tailed hopper (a subject of future work by the authors). 

The first focus of future work is a complete analysis of stability of tail-energized hopping on the Jerboa, and development of formal tools for design and verification of parallel composition.
Second, our analysis in this paper is very specifically targetted to the tailed hopper (including the hand-designed hierarchy in Fig.\ \ref{fig:Tree}), but in future work we plan to generalize these ideas to other tasks as well as platforms. As explained in \S\ref{sec:Prelim}, we focus on closed-loop templates in this paper, but there is an accompanying interesting problem of assignment of actuator affordances to the control of specific compartments.
ar


\iftoggle{techrep}{

\section*{Acknowledgments}

This work was supported in part by the ARL/GDRS RCTA project, Coop.\ Agreement \#W911NF-10–2−0016 and in part by NSF grant \#1028237. 
The authors would like to thank Gavin Kenneally for help with actuator selection and general design feedback on the Jerboa, and Shafag Idris for motor testing.

}{}

\bibliographystyle{unsrtnat}

\bibliography{compositionTR}

\begin{thebibliography}{38}
\providecommand{\natexlab}[1]{#1}
\providecommand{\url}[1]{\texttt{#1}}
\expandafter\ifx\csname urlstyle\endcsname\relax
  \providecommand{\doi}[1]{doi: #1}\else
  \providecommand{\doi}{doi: \begingroup \urlstyle{rm}\Url}\fi

\bibitem[De and Koditschek(2015)]{de_parallel_2015}
Avik De and Daniel~E. Koditschek.
\newblock Parallel composition of templates for tail-energized planar hopping.
\newblock In \emph{Robotics and {Automation} ({ICRA}), 2015 {IEEE}
  {International} {Conference} on}, pages 4562--4569, May 2015.
\newblock \doi{10.1109/ICRA.2015.7139831}.

\bibitem[Westervelt and Grizzle(2007)]{westervelt_feedback_2007}
{E.R.} Westervelt and {J.W.} Grizzle.
\newblock \emph{Feedback Control of Dynamic Bipedal Robot Locomotion}.
\newblock Control and Automation Series. {CRC} {PressINC}, 2007.
\newblock ISBN 9781420053722.
\newblock URL \url{http://books.google.com/books?id=xaMeAQAAIAAJ}.

\bibitem[Tedrake(2009)]{tedrake_underactuated_2009}
Russ Tedrake.
\newblock Underactuated robotics: Learning, planning, and control for efficient
  and agile machines course notes for {MIT} 6.832.
\newblock Technical report, 2009.
\newblock URL \url{http://link.springer.com/article/10.1023/A:1008844026298}.

\bibitem[Holmes et~al.(2006)Holmes, Full, Koditschek, and
  Guckenheimer]{holmes_dynamics_2006}
Philip Holmes, Robert~J. Full, Dan Koditschek, and John Guckenheimer.
\newblock The dynamics of legged locomotion: Models, analyses, and challenges.
\newblock \emph{Siam Review}, 48\penalty0 (2):\penalty0 207--304, 2006.
\newblock URL \url{http://epubs.siam.org/doi/abs/10.1137/S0036144504445133}.

\bibitem[Buehler et~al.(1989)Buehler, Koditschek, and
  Kindlmann]{Buehler_Koditschek_Kindlmann_1989}
M~Buehler, DE~Koditschek, and PJ~Kindlmann.
\newblock \emph{A simple juggling robot: Theory and experimentation}, volume
  139 of \emph{Lecture Lecture Notes in Control and Information Sciences},
  pages 35--–73.
\newblock 1989.

\bibitem[Whitcomb et~al.(1993)Whitcomb, Rizzi, and
  Koditschek]{whitcomb_comparative_1993}
Louis~L. Whitcomb, Alfred~A. Rizzi, and Daniel~E. Koditschek.
\newblock Comparative experiments with a new adaptive controller for robot
  arms.
\newblock \emph{{IEEE} Trans. on Robotics and Automation}, 9\penalty0
  (1):\penalty0 59--70, 1993.
\newblock URL
  \url{http://ieeexplore.ieee.org/xpls/abs_all.jsp?arnumber=210795}.

\bibitem[Burden et~al.(2014)Burden, Revzen, Sastry, and
  Koditschek]{Burden_Revzen_Sastry_Koditschek_2014}
Samuel~A. Burden, Shai Revzen, S.~Shankar Sastry, and Daniel~E. Koditschek.
\newblock Event-selected vector field discontinuities yield
  piecewise-differentiable flows.
\newblock \emph{arXiv:1407.1775 [math]}, Jul 2014.
\newblock URL \url{http://arxiv.org/abs/1407.1775}.
\newblock arXiv: 1407.1775.

\bibitem[Nakanishi et~al.(2000)Nakanishi, Fukuda, and
  Koditschek]{nakanishi_brachiating_2000}
Jun Nakanishi, Toshio Fukuda, and Daniel~E. Koditschek.
\newblock A brachiating robot controller.
\newblock \emph{{IEEE} Transactions on Robotics and Automation}, 16\penalty0
  (2):\penalty0 109--123, 2000.
\newblock URL
  \url{http://ieeexplore.ieee.org/xpls/abs_all.jsp?arnumber=843166}.

\bibitem[Saranli et~al.(1998)Saranli, Schwind, and
  Koditschek]{Saranli_Schwind_Koditschek_1998}
U.~Saranli, W.~J. Schwind, and D.~E. Koditschek.
\newblock Toward the control of a multi-jointed, monoped runner.
\newblock In \emph{1998 IEEE International Conference on Robotics and
  Automation}, volume~3, pages 2676–--2682, 1998.

\bibitem[Full and Koditschek(1999)]{full_templates_1999}
R.~J. Full and D.~E. Koditschek.
\newblock Templates and anchors: neuromechanical hypotheses of legged
  locomotion on land.
\newblock \emph{Journal of Exp.\ Biology}, 202\penalty0 (23):\penalty0
  3325--3332, December 1999.
\newblock ISSN 0022-0949, 1477-9145.
\newblock URL \url{http://jeb.biologists.org/content/202/23/3325}.

\bibitem[Blickhan and Full(1993)]{Blickhan_Full_1993}
R.~Blickhan and R.~J. Full.
\newblock Similarity in multilegged locomotion: Bouncing like a monopode.
\newblock \emph{Journal of Comparative Physiology A: Sensory, Neural, and
  Behavioral Physiology}, 173\penalty0 (5):\penalty0 509–--517, 1993.

\bibitem[Libby et~al.(2012)Libby, Moore, Chang-Siu, Li, Cohen, Jusufi, and
  Full]{libby_tail-assisted_2012}
Thomas Libby, Talia~Y. Moore, Evan Chang-Siu, Deborah Li, Daniel~J. Cohen,
  Ardian Jusufi, and Robert~J. Full.
\newblock Tail-assisted pitch control in lizards, robots and dinosaurs.
\newblock \emph{Nature}, 481\penalty0 (7380):\penalty0 181--184, January 2012.
\newblock ISSN 0028-0836.
\newblock \doi{10.1038/nature10710}.
\newblock URL
  \url{http://www.nature.com/nature/journal/v481/n7380/full/nature10710.html}.

\bibitem[Johnson et~al.(2012)Johnson, Chang-Siu, Libby, Tomizuka, Full, and
  Koditschek]{johnson_tail_2012}
Aaron~M. Johnson, E.~Chang-Siu, T.~Libby, Masayoshi Tomizuka, Robert~J. Full,
  and D.~E. Koditschek.
\newblock Tail assisted dynamic self righting.
\newblock In \emph{Proc. Intl.\ Conf.\ on Climbing and Walking Robots}, 2012.
\newblock URL \url{http://kodlab.seas.upenn.edu/uploads/Aaron/tails_final.pdf}.

\bibitem[Raibert(1986)]{raibert_legged_1986}
{M.H.} Raibert.
\newblock \emph{Legged Robots that Balance}.
\newblock Artificial Intelligence. {MIT} Press, 1986.
\newblock ISBN 9780262181174.
\newblock URL \url{http://books.google.com/books?id=EXRiBnQ37RwC}.

\bibitem[Gillis et~al.(2009)Gillis, Bonvini, and Irschick]{gillis_losing_2009}
Gary~B. Gillis, Lauren~A. Bonvini, and Duncan~J. Irschick.
\newblock Losing stability: tail loss and jumping in the arboreal lizard anolis
  carolinensis.
\newblock \emph{Journal of Experimental Biology}, 212\penalty0 (5):\penalty0
  604--609, March 2009.
\newblock ISSN 0022-0949, 1477-9145.
\newblock \doi{10.1242/jeb.024349}.
\newblock URL \url{http://jeb.biologists.org/content/212/5/604}.

\bibitem[Higham et~al.(2001)Higham, Davenport, and
  Jayne]{higham_maneuvering_2001}
Timothy~E. Higham, Matthew~S. Davenport, and Bruce~C. Jayne.
\newblock Maneuvering in an arboreal habitat: the effects of turning angle on
  the locomotion of three sympatric ecomorphs of anolis lizards.
\newblock \emph{Journal of Experimental Biology}, 204\penalty0 (23):\penalty0
  4141--4155, December 2001.
\newblock ISSN 0022-0949, 1477-9145.
\newblock URL \url{http://jeb.biologists.org/content/204/23/4141}.

\bibitem[Pullin et~al.(2012)Pullin, Kohut, Zarrouk, and
  Fearing]{pullin2012dynamic}
Andrew~O Pullin, Nicholas~J Kohut, David Zarrouk, and Ronald~S Fearing.
\newblock Dynamic turning of 13 cm robot comparing tail and differential drive.
\newblock In \emph{Robotics and Automation (ICRA), 2012 IEEE International
  Conference on}, pages 5086--5093. IEEE, 2012.

\bibitem[O'Connor et~al.(2014)O'Connor, Dawson, Kram, and
  Donelan]{oconnor_kangaroos_2014}
S.~M. O'Connor, T.~J. Dawson, R.~Kram, and J.~M. Donelan.
\newblock The kangaroo's tail propels and powers pentapedal locomotion.
\newblock \emph{Biology Letters}, 10\penalty0 (7), July 2014.
\newblock ISSN 1744-9561, 1744-957X.
\newblock \doi{10.1098/rsbl.2014.0381}.
\newblock URL
  \url{http://rsbl.royalsocietypublishing.org/cgi/doi/10.1098/rsbl.2014.0381}.

\bibitem[Secer and Saranli(2013)]{secer_control_2013}
G.~Secer and U.~Saranli.
\newblock Control of monopedal running through tunable damping.
\newblock In \emph{2013 21st Signal Processing and Communications Applications
  Conference ({SIU})}, pages 1--4, April 2013.
\newblock \doi{10.1109/SIU.2013.6531557}.

\bibitem[Arslan and Saranli(2012)]{arslan_reactive_2012}
Omur Arslan and Uluc Saranli.
\newblock Reactive planning and control of planar spring-mass running on rough
  terrain.
\newblock \emph{{IEEE} Transactions on Robotics}, 28\penalty0 (3):\penalty0
  567--579, June 2012.
\newblock ISSN 1552-3098, 1941-0468.
\newblock \doi{10.1109/TRO.2011.2178134}.
\newblock URL
  \url{http://ieeexplore.ieee.org/lpdocs/epic03/wrapper.htm?arnumber=6112244}.

\bibitem[Eriksson(1971)]{Eriksson_1971}
Erik Eriksson.
\newblock Compartment models and reservoir theory.
\newblock \emph{Annual Review of Ecology and Systematics}, pages 67–--84,
  1971.

\bibitem[Burden et~al.(2011)Burden, Revzen, and Sastry]{burden_dimension_2011}
Samuel Burden, Shai Revzen, and Shankar~S. Sastry.
\newblock Dimension reduction near periodic orbits of hybrid systems.
\newblock In \emph{2011 50th {IEEE} Conference on Decision and Control and
  European Control Conference ({CDC}-{ECC})}, pages 6116--6121. {IEEE}, 2011.
\newblock URL
  \url{http://ieeexplore.ieee.org/xpls/abs_all.jsp?arnumber=6160405}.

\bibitem[Koditschek and Buehler(1991)]{koditschek_analysis_1991}
Daniel~E. Koditschek and Martin Buehler.
\newblock Analysis of a simplified hopping robot.
\newblock \emph{The International Journal of Robotics Research}, 10\penalty0
  (6):\penalty0 587--605, December 1991.
\newblock ISSN 0278-3649, 1741-3176.
\newblock \doi{10.1177/027836499101000601}.
\newblock URL \url{http://ijr.sagepub.com/content/10/6/587}.

\bibitem[Ghigliazza et~al.(2005)Ghigliazza, Altendorfer, Holmes, and
  Koditschek]{ghigliazza_simply_2005}
R.~M. Ghigliazza, R.~Altendorfer, P.~Holmes, and D.~Koditschek.
\newblock A simply stabilized running model.
\newblock \emph{{SIAM} Review}, 47\penalty0 (3):\penalty0 519--549, January
  2005.
\newblock ISSN 0036-1445, 1095-7200.
\newblock \doi{10.1137/050626594}.
\newblock URL \url{http://epubs.siam.org/doi/abs/10.1137/050626594}.

\bibitem[Pratt et~al.(2006)Pratt, Carff, Drakunov, and
  Goswami]{pratt_capture_2006}
Jerry Pratt, John Carff, Sergey Drakunov, and Ambarish Goswami.
\newblock Capture point: A step toward humanoid push recovery.
\newblock In \emph{2006 6th {IEEE} International Conference on Humanoid
  Robots}, pages 200--207, 2006.
\newblock URL
  \url{http://ieeexplore.ieee.org/xpls/abs_all.jsp?arnumber=4115602}.

\bibitem[Kajita et~al.(2003)Kajita, Kanehiro, Kaneko, Fujiwara, Harada, Yokoi,
  and Hirukawa]{kajita_biped_2003}
S.~Kajita, F.~Kanehiro, K.~Kaneko, K.~Fujiwara, K.~Harada, K.~Yokoi, and
  H.~Hirukawa.
\newblock Biped walking pattern generation by using preview control of
  zero-moment point.
\newblock In \emph{2003 {IEEE} International Conference on Robotics and
  Automation}, volume~2, pages 1620--1626 vol.2, September 2003.
\newblock \doi{10.1109/ROBOT.2003.1241826}.

\bibitem[Schwind and Koditschek(1995)]{schwind_control_1995}
W.~J. Schwind and D.~E. Koditschek.
\newblock Control of forward velocity for a simplified planar hopping robot.
\newblock In \emph{1995 {IEEE} International Conference on Robotics and
  Automation}, volume~1, pages 691--696, 1995.
\newblock URL
  \url{http://ieeexplore.ieee.org/xpls/abs_all.jsp?arnumber=525364}.

\bibitem[Geyer et~al.(2005)Geyer, Seyfarth, and
  Blickhan]{geyer_spring-mass_2005}
Hartmut Geyer, Andre Seyfarth, and Reinhard Blickhan.
\newblock Spring-mass running: simple approximate solution and application to
  gait stability.
\newblock \emph{Journal of theoretical biology}, 232\penalty0 (3):\penalty0
  315--328, 2005.

\bibitem[Johnson and Koditschek(2013{\natexlab{a}})]{johnson_legged_2013}
Aaron~M. Johnson and D.~E. Koditschek.
\newblock Legged self-manipulation.
\newblock \emph{{IEEE} Access}, 1:\penalty0 310–--334, May
  2013{\natexlab{a}}.
\newblock \doi{10.1109/ACCESS.2013.2263192}.

\bibitem[Schmitt and Holmes(2000)]{schmitt_mechanical_2000}
J~Schmitt and P~Holmes.
\newblock Mechanical models for insect locomotion: dynamics and stability in
  the horizontal plane i. theory.
\newblock \emph{Biological cybernetics}, 83\penalty0 (6):\penalty0 501--515,
  December 2000.
\newblock ISSN 0340-1200.

\bibitem[Koditschek(1987)]{koditschek_adaptive_1987}
D.~E. Koditschek.
\newblock Adaptive techniques for mechanical systems.
\newblock In \emph{Proc. 5th. Yale Workshop on Adaptive Systems}, pages
  259–--265. May 1987.
\newblock URL \url{http://repository.upenn.edu/ese_papers/416/}.

\bibitem[Duperret et~al.(2014)Duperret, Kenneally, Pusey, and
  Koditschek]{duperret_specific_agility_2014}
J.~M. Duperret, G.~D. Kenneally, J.~L. Pusey, and D.~E. Koditschek.
\newblock Towards a comparative measure of legged agility.
\newblock In \emph{International Symposium on Experimental Robotics}, June
  2014.
\newblock In press.

\bibitem[Saranli et~al.(2001)Saranli, Buehler, and
  Koditschek]{saranli_rhex:_2001}
Uluc Saranli, Martin Buehler, and Daniel~E. Koditschek.
\newblock {RHex:} a simple and highly mobile hexapod robot.
\newblock \emph{International Journal of Robotics Research}, 20:\penalty0
  616–631, 2001.

\bibitem[Johnson and Koditschek(2013{\natexlab{b}})]{johnson_toward_2013}
Aaron~M. Johnson and D.~E. Koditschek.
\newblock Toward a vocabulary of legged leaping.
\newblock In \emph{Proc.\ ICRA 2013}, pages 2553–--2560, May
  2013{\natexlab{b}}.

\bibitem[Seok et~al.(2012)Seok, Wang, Otten, and Kim]{seok2012actuator}
Sangok Seok, Albert Wang, David Otten, and Sangbae Kim.
\newblock Actuator design for high force proprioceptive control in fast legged
  locomotion.
\newblock In \emph{2012 IEEE/RSJ International Conference on Intelligent Robots
  and Systems (IROS)}, pages 1970--1975. IEEE, 2012.

\bibitem[Galloway et~al.(2010)Galloway, Haynes, Ilhan, Johnson, Knopf, Lynch,
  Plotnick, White, and Koditschek]{galloway_x-rhex:_2010}
Kevin Galloway, Galen Haynes, B.~Deniz Ilhan, Aaron Johnson, Ryan Knopf, Goran
  Lynch, Benjamin Plotnick, Mackenzie White, and Daniel Koditschek.
\newblock X-{RHex}: {A} {Highly} {Mobile} {Hexapedal} {Robot} for
  {Sensorimotor} {Tasks}.
\newblock Technical report, November 2010.
\newblock URL \url{http://repository.upenn.edu/ese_reports/8}.

\bibitem[Piccoli and Yim(2014)]{piccoli_cogging_2014}
Matthew Piccoli and Mark Yim.
\newblock Cogging {Torque} {Ripple} {Minimization} via {Position} {Based}
  {Characterization}.
\newblock In \emph{Proceedings of {Robotics}: {Science} and {Systems}},
  Berkeley, USA, July 2014.

\bibitem[Kenneally and Koditschek(2014)]{kenneally_kinematic_2014}
Gavin Kenneally and D.~E. Koditschek.
\newblock Kinematic leg design in an electromechanical robot.
\newblock Technical report, Workshop article presented at {ICRA} 2014, Hong
  Kong, May 2014.
\newblock URL \url{http://kodlab.seas.upenn.edu/Kenneally/ICRA2014}.

\end{thebibliography}

\end{document}